\newtheorem{theorem}{Theorem}
\newtheorem{lemma}{Lemma}
\newtheorem{remark}{Remark}
\newtheorem{fact}{Fact}
\newtheorem{proposition}{Proposition}
\icmltitlerunning{Speeding up Deep Learning Training by Sharing Weights and Then Unsharing}
\begin{document}

\twocolumn[
\icmltitle{Speeding up Deep Model Training by Sharing Weights and Then Unsharing}

\icmlsetsymbol{equal}{*}

\begin{icmlauthorlist}
\icmlauthor{Shuo Yang}{equal,aus}
\icmlauthor{Le Hou}{equal,goo}
\icmlauthor{Xiaodan Song}{goo}
\icmlauthor{Qiang Liu}{aus}
\icmlauthor{Denny Zhou}{goo}
\end{icmlauthorlist}

\icmlaffiliation{aus}{Department of Computer Science, The University of Texas at Austin}
\icmlaffiliation{goo}{Google}

\icmlcorrespondingauthor{Denny Zhou}{dennyzhou@google.com}
\icmlkeywords{Neutral network training, Optimization for deep learning, BERT}

\vskip 0.3in
]

\printAffiliationsAndNotice{\icmlEqualContribution~Work done at Google. Submitted to ICLR 2021.} 

\begin{abstract}
We propose a simple and efficient approach for training the BERT model. Our approach exploits the special structure of BERT that contains a stack of repeated modules (i.e., transformer encoders). Our proposed approach first trains BERT with the weights shared across all the repeated modules till some point. This is for learning the commonly shared component of weights across all repeated layers. We then stop weight sharing and continue training until convergence. We present theoretic insights for training by sharing weights then unsharing with analysis for simplified models. Empirical experiments on the BERT model show that our method yields better performance of trained models, and significantly reduces the number of training iterations.
\end{abstract}

\section{Introduction}
\label{sec:introduction}
It has been widely observed that increasing model size often leads to significantly better performance on various real tasks, especially natural language processing applications \citep{amodei2016deep, wu2016google, vaswani2017attention, devlin2018bert, raffel2020exploring, brown2020language}. However, as models getting larger, the optimization problem becomes more and more challenging and time consuming. To alleviate this problem, there has been a growing interest in developing systems and algorithms for more efficient training of deep learning models, such as distributed large-batch training \citep{goyal2017accurate, shazeer2018mesh,lepikhin2020gshard, you2020large}, various normalization methods \citep{de2020batch, salimans2016weight, ba2016layer}, gradient clipping and normalization \citep{kim2016accurate, chen2018gradnorm}, and progressive/transfer training \citep{chen2015net2net, chang2018multi, gong2019efficient}.

In this paper, we seek a better training approach by exploiting common network architectural patterns. In particular, we are interested in speeding up the training of deep networks which are constructed by repeatedly stacking the same layer, with a special focus on the BERT model. We propose a simple method for efficiently training such kinds of networks. In our approach, we first force the weights to be shared across all the repeated layers and train the network, and then stop the weight-sharing and continue training until convergence. Empirical studies show that our method yields trained models with better performance, or reduces the training time of commonly used models.

Our method is motivated by the successes of weight-sharing models, in particular, ALBERT \citep{lan2020albert}. It is a variant of BERT in which weights across all repeated transformer layers are shared. As long as its architecture is sufficiently large, ALBERT is comparable with the original BERT on various downstream natural language processing benchmarks. This indicates that the optimal weights of an ALBERT model and the optimal weights of a BERT model can be very close. We further assume that there is a \textit{commonly shared component} across weights of repeated layers.

Correspondingly, our training method consists of two phases: In the first phase, a neural network is trained with weights shared across its repeated layers, to learn the commonly shared component across weights of different layers; In the second phase, the weight-sharing constraint is released, and the network is trained till convergence, to learn a different weight for each layer based on the shared component.

We theoretically show that training in the direction of the shared component in the early steps effectively constrains the model complexity and provides better initialization for later training. Eventually, the trained weights are closer to the optimal weights, which leads to better performance. See Figure \ref{fig:illustration} for an illustration.

\begin{figure}[ht]
    \includegraphics[clip, trim={220 200 340 120}, width=\columnwidth]{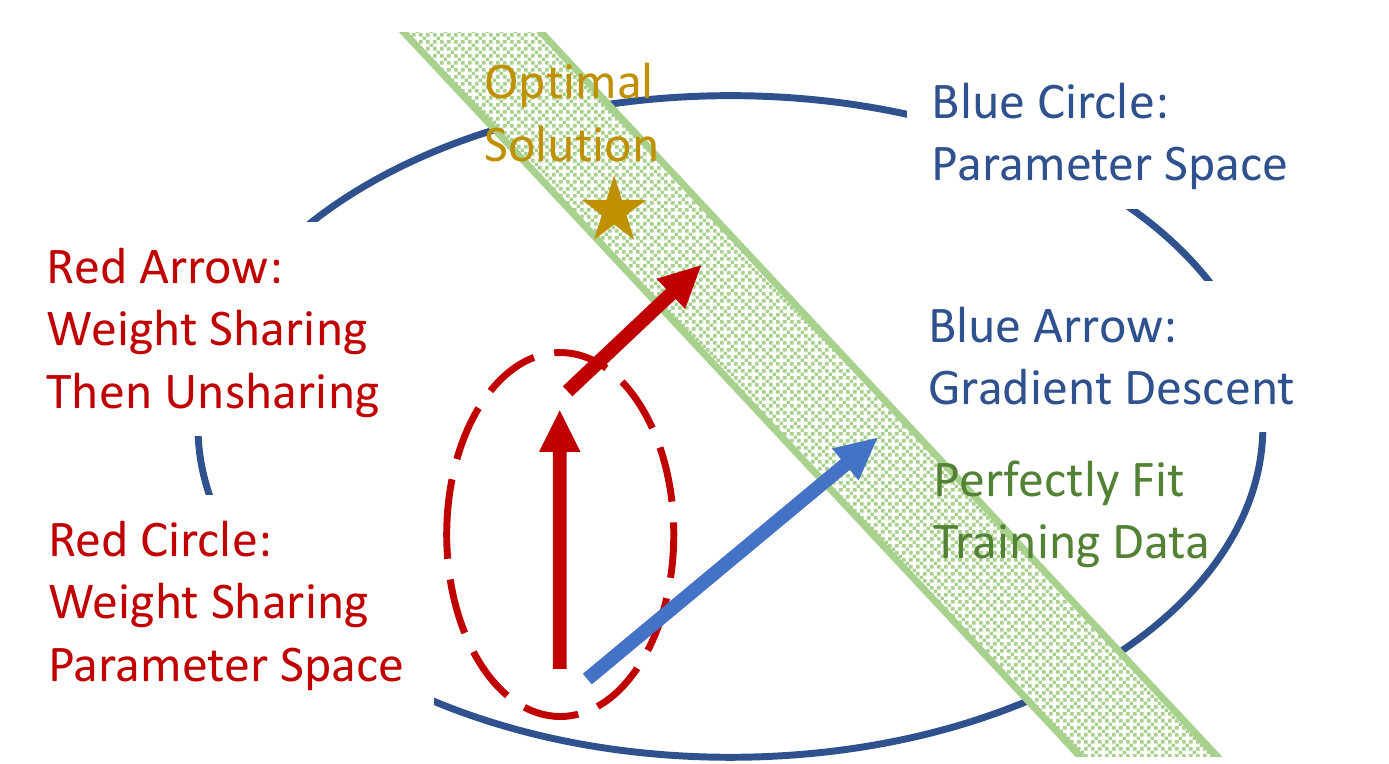}
    \vspace*{-6mm}
    \caption{Intuition for sharing weights then unsharing. The figure illustrates how the weights change when trained by sharing weights then unsharing (the red arrows) and by gradient descent (the blue arrow). Weight sharing first trains the weights in a more constrained set. It brings the weights closer to the optimal solution, which provides a better initialization for subsequent training. Comparing with gradient descent, which directly optimizes the weights in the original parameter space, our proposed training method converges to a solution closer to the optimal, and thus leads to better generalization performance.
    }
    \label{fig:illustration}
\end{figure}

Existing approaches can be viewed as extreme cases as our proposed method: ALBERT shares weights throughout the training process, and BERT does not share weights at all. Our experiments show that oftentimes under various settings, the best result is given by sharing weights for the first 10\% of the total number of iterations.

The rest of this paper is organized as follows. We present our algorithmic motivation and the sharing weights then unsharing algorithm in Section \ref{sec:algorithm}. In Section \ref{sec:theory}, we present some theoretical insights of the proposed algorithm. In Section \ref{sec:related}, we discuss related work.  In Section \ref{sec:experiments},  we show detailed experimental setups and results. We also provide various ablation studies on different choices in implementing our algorithm. Finally, we conclude this paper with discussions in Section \ref{sec:conclusion}. 
\section{Sharing Weights and Then Unsharing}
\label{sec:algorithm}
In this section, we first motivate the training with ``sharing weights and then unsharing" as a way of reparameterization. We show how a particular reparameterization of ``stem-direction" and ``branch-direction" naturally leads to our algorithm. We then formally present our algorithm.

\paragraph{Algorithmic Motivation}
For a machine learning model $\Mcal$ composed by $L$ modules with the same structure $\cbr{\Mcal_{i}}, i = 1,\cdots, L$ (e.g., the transformer modules \citep{vaswani2017attention} in BERT), let $w$ be the trainable weights of $\Mcal$ and $w_{i}$ be the weights correspond to $\Mcal_{i}$. We can rewrite the $w_{i}$s as
\begin{align}\label{eq:reparameter}
  w_{i} = \frac{1}{\sqrt{L}}w_{0} + \widetilde w_{i}, \quad i = 1, \cdots, L
.\end{align}
The $w_{0}$ can be viewed as the \textbf{\textit{stem-direction}} shared by all $w_{i}$, while $\widetilde w_{i}$s are the \textbf{\textit{branch-directions}}, capturing the difference among $w_{i}$s. The $\frac{1}{\sqrt{L}}$ is a scaling factor whose meaning will be clear soon. To optimize $w$ for $\Mcal$ by $T$ steps of gradient descent, let $\eta$ be the step size, one could do 

\begin{enumerate}
  \item \textbf{Sharing weights in early stage: }For some $\alpha \in (0,1)$, in the first $\tau = \alpha\cdot T$ steps, compute gradient $g_{0}$ for $w_{0}$ and update all $w_{0}$ by $\eta g_{0}$.
  \item \textbf{Unsharing weights in later stage: } For $t \ge \alpha \cdot T$, compute gradient $\widetilde g_{i}$ for $\widetilde w_{i}$ and update all $\widetilde w_{i}$ with $\eta \widetilde g_{i}$.
\end{enumerate}
On a high-level, training only on the \textit{stem-direction} ($w_{0}$) in the early steps effectively constrains the model complexity and provides better initialization for later training.

It is very easy to implement the training with aforementioned reparameterization. The gradients $g_0, \widetilde g_i$ can be easily adapted from the gradient $g_i$ of weights $w_i$, where we have
\begin{align*}
    & g_{0} = \sum_{i=1}^{L}\frac{\partial w_{i}}{\partial w_{0}}g_{i} = \frac{1}{\sqrt{L}}\sum_{i=1}^{L} g_{i},\quad \widetilde g_{i} = \frac{\partial w_{i}}{\partial \widetilde w_{i}}g_{i} = g_{i}.
\end{align*}

Thus the effective update to $w_i$ in every step are
\begin{align*}
    & \textbf{Sharing weights : } \frac{1}{\sqrt{L}}\cdot \frac{\eta}{\sqrt{L}}\sum_{i=1}^{L}g_{i} = \eta \cdot \frac{1}{L}\sum_{i=1}^{L}g_{i}  \\
    & \textbf{Unsharing weights : } \eta g_{i}
\end{align*}

In the special case of all $g_i$s are equal, the ``sharing weights and then unsharing" is equivalent to the standard gradient descent. This shows that $\frac{1}{\sqrt{L}}$ gives the right scaling.
 
\paragraph{Practical Algorithm}
We now formally present our algorithm, which first trains the deep network with all the weights shared. Then, after a certain number of training steps, it unties the weights and further trains the network until convergence. See Algorithm \ref{alg:weight_sharing_fix}.

\begin{algorithm}[ht]
\centering 
\begin{algorithmic}[1]
\STATE \textbf{Input:} total number of training steps $T$, untying point $\tau$, learning rates $\{\eta^{(t)}, t = 1, \dots, T\}$
\STATE  Randomly and equally initialize weights $w^{(0)}_1, \dots, w^{(0)}_L$  
\FOR{$t=1$ \textbf{to} $T$}
    \STATE Compute gradient $g^{(t)}_i$ of $w^{(t-1)}_i, \ i=1, \dots, L$ 
    \IF{$t < \tau$}
        \STATE $w^{(t)}_i = w^{(t-1)}_i - \frac{\eta^{(t)}}{L}\sum_{k=1}^L g^{(t)}_{k},~i = 1, \dots, L$
    \ELSE
        \STATE $w^{(t)}_i = w^{(t-1)}_i - \eta^{(t)} \cdot g_i^{(t)}, \ i = 1, \dots, L$
    \ENDIF
\ENDFOR 
\end{algorithmic}
\caption{\textsc{Sharing Weights and then Untying} }
\label{alg:weight_sharing_fix}
\end{algorithm}

Note that, from lines 1 to 6, we initialize all the weights equally,  and then update them using the mean of their gradients. It is easy to see that such an update is equivalent to sharing weights; in lines 6 to 8, the $w_i^{(t)}$s are updated according to their own gradient, which corresponds to the unshared weight. For the sake of simplicity,  we only show how to update the weights using the plain (stochastic) gradient descent rule. One can replace this plain update rule with any of their favorite optimization methods, for example, the Adam optimization algorithm \citep{kingma2014adam}. 

While the repeated layers being the most natural units for weight sharing, that is not the only choice. We may view several layers together as the weight sharing unit, and share the weights across those units. The layers within the same unit can have different weights. For example, for a 24-layer transformer model, we may combine every four layers as a weight-sharing unit.  Thus,  there will be six such units for weight sharing. Such flexibility of choosing weight-sharing units allows for a balance between ``full weight sharing" and ``no weight sharing" at all.
\section{Theoretic Understanding}
\label{sec:theory}

In this section, we first provide theoretic insights of sharing weights then unsharing via an illustrative example of over-parameterized linear regression. Both the numerical experiment and theoretical analysis show that training by weight sharing allows the model to learn the commonly shared component of weights, which provides good initialization for subsequent training and leads to good generalization performance. We then extend the analysis to a deep linear model and show similar benefits.

\subsection{An Illustrative Example}
We first use linear regression, the simplest machine learning model, to illustrate our core idea - training with weight sharing helps the model learn the commonly shared component of the optimal solution. Consider a linear model with $y = w^{*\top}x$, where $w^{*}\in \RR^{d}$ is the ground truth model parameter; $x\in \RR^{L}$ is the input and $y$ is the response. We can rewrite $w^{*}$ element-wise as $w^{*}_{i} = w^{*}_{0} + \widetilde w^{*}_{i}$, i.e., elements in $w$ are roughly analogous to the repeated modules.

As a concrete example, we set $L = 200$, and draw $w^{*} \sim \Ncal^{200}(1, 1)$ and generate 120 training samples by drawing $x_i \sim \Ncal^{200}(0, 1)$ and compute $y_i = w^{*\top}x_i$ correspondingly. Note that the dimension is larger than the size of training samples, and the regression problem yields an infinite number of solutions that can perfectly fit the training data.

We initialize $\widehat w = 0$ and train the model with \textit{gradient descent, sharing weights then unsharing} for 500 steps, with weights untied after 100 steps. Figure \ref{fig:regression} shows that training by sharing weights then unsharing, the model first learns the mean of $w^*$ and has significantly better performance when fully trained.

\begin{figure}[ht]
    \includegraphics[width=\columnwidth]{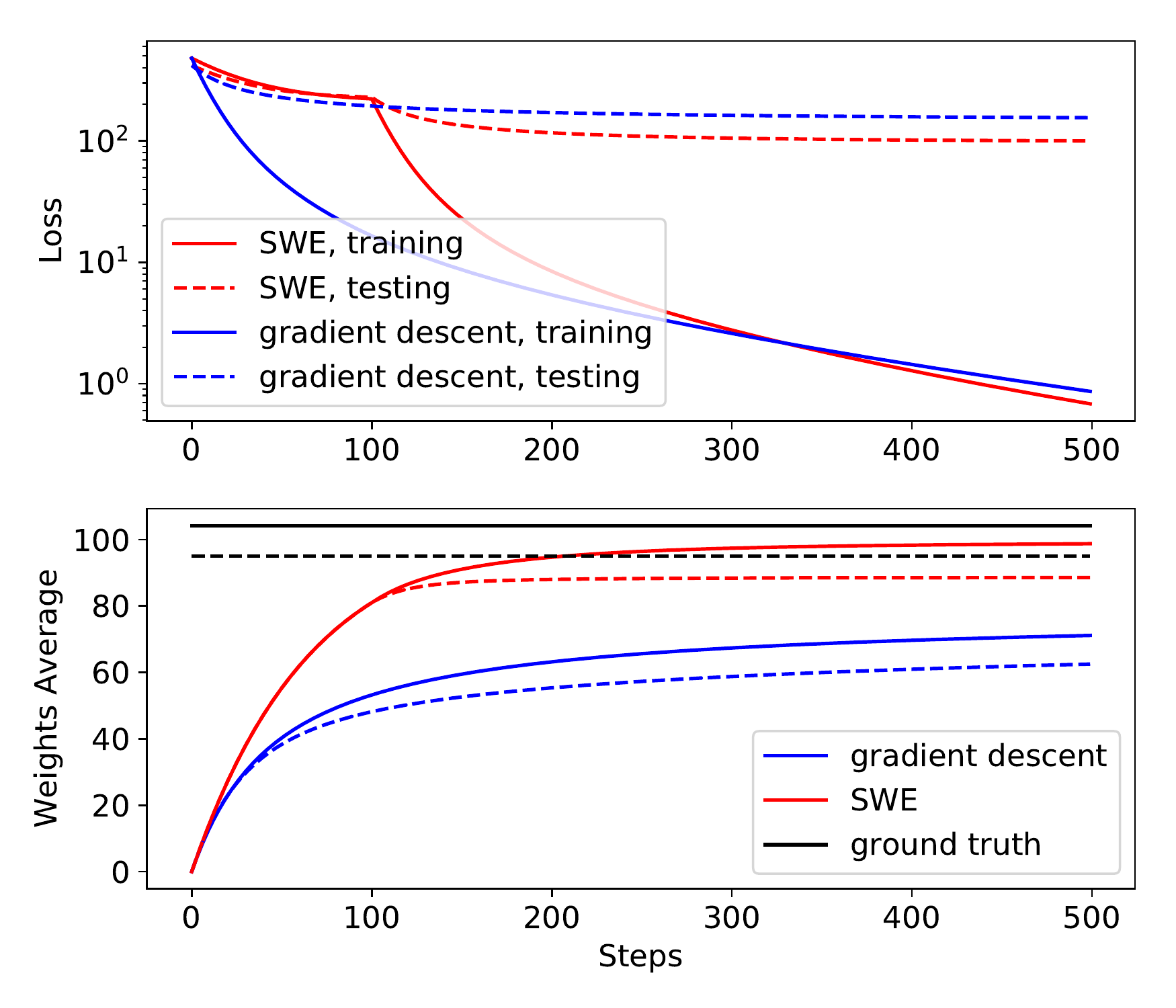}
    \vspace*{-6mm}
    \caption{Fitting an under-determined linear regression (less training samples than dimension). ``SWE" refers to training by ``sharing weights then unsharing". The first figure shows that both methods perfectly fit the training set, while the sharing weights then unsharing has significantly better generalization performance. The second shows the change of parameters. The solid and dash curves are the average of the first and last 100 elements in $\widehat w$. By
    training with weight-sharing, $\widehat w$ first learns the mean of $w^{*}$ which leads to a much better initialization when it starts to learn the whole $w^{*}$.}
    \label{fig:regression}
\end{figure}

Next, we formalize our observation in Figure \ref{fig:regression} that training with sharing weights then unsharing learns the commonly shared component, even when the model is over-parameterized (i.e. having more parameters than training samples).

Denote by $\Wcal_{0}$  the space generated by the shared weights $[w_{0}, \cdots, w_{0}]\in \RR^L$. In the special case of the aforementioned linear regression, $\Wcal_0$ is a 1-dimensional space. Let $\overline{w}^* = \Pcal_{\Wcal_{0}}(w^*)$ be the commonly shared weights for all coordinates, where $\Pcal_{\Wcal_0}(\cdot)$ denotes the projection to $\Wcal_0$. Here the elements of $\overline w^*$ are simply the average of $w^*$. Our next proposition shows that when trained with weight sharing, $\widehat w$ converges to $\overline{w}^*$.

\begin{proposition}\label{prop:regression}
  Consider the $L$-dimensional linear regression problem and $n$ training samples $\cbr{x_i, y_i}$ with $x_i$ generated from $\Ncal^L(0, 1)$ and $y_i = x_i^\top w^*$, we have
  \begin{align*}
      \norm{\widehat w - \overline w^*}_2  = O\rbr{\sqrt{L/n}},
  \end{align*}
  where $\widehat w$ is the solution obtained by training with weight sharing. Especially, this result holds for the over-parameterized regime with $L > n$.
\end{proposition}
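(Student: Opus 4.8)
The plan is to reduce the weight-sharing phase to a one-dimensional least-squares problem and then apply Gaussian concentration. During sharing, all coordinates of the iterate are forced equal, so $\what = \chat\,\one$ for a scalar $\chat\in\RR$, and the update of Algorithm~\ref{alg:weight_sharing_fix} restricted to $\Wcal_0 = \myspan{\one}$ is exactly gradient descent on the strictly convex quadratic $c \mapsto \frac{1}{2n}\sum_{i=1}^n (c\,s_i - y_i)^2$, where $s_i := \one^\top x_i = \sum_{j=1}^L x_{i,j}$. Hence, for an appropriate step size and enough iterations (GD converges geometrically on a one-dimensional quadratic), it returns the unique minimizer $\chat = \rbr{\sum_i s_i y_i}\big/\rbr{\sum_i s_i^2}$. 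Since $\overline w^* = \Pcal_{\Wcal_0}(w^*) = \mu\,\one$ with $\mu := \frac1L\sum_j w^*_j$, and $\norm{\what - \overline w^*}_2 = \abs{\chat - \mu}\sqrt L$, it suffices to show $\abs{\chat - \mu} = O(1/\sqrt n)$.

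Next I would center the estimator. Writing $r_i := y_i - \mu\,s_i = \sum_{j=1}^L x_{i,j} v_j$ with $v_j := w^*_j - \mu$ (so $\sum_j v_j = 0$), one gets $\chat - \mu = \rbr{\sum_i s_i r_i}\big/\rbr{\sum_i s_i^2}$. The key structural fact is that $s_i$ and $r_i$ are jointly Gaussian with $\EE[s_i r_i] = \sum_j v_j = 0$, hence independent, and the pairs $(s_i, r_i)$ are i.i.d.\ across $i$. Consequently $\sum_i s_i r_i$ is a sum of $n$ i.i.d.\ mean-zero terms with $\EE\rbr{\rbr{\sum_i s_i r_i}^2} = n\,\EE[s_i^2]\,\EE[r_i^2] = nL\,\norm{v}_2^2$, so by Chebyshev $\abs{\sum_i s_i r_i} = O\rbr{\sqrt{nL}\,\norm{v}_2}$ with high constant probability. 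For the denominator, $s_i/\sqrt L \sim \Ncal(0,1)$ i.i.d., so $\sum_i s_i^2$ equals $L$ times a $\chi^2_n$ variable, and the standard chi-squared lower tail gives $\sum_i s_i^2 \ge nL/2$ with probability $1 - e^{-\Omega(n)}$ once $n$ exceeds an absolute constant. Dividing, $\abs{\chat - \mu} = O\rbr{\norm{v}_2/\sqrt{nL}}$, and therefore $\norm{\what - \overline w^*}_2 = O\rbr{\norm{v}_2/\sqrt n}$.

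The last step is to bound $\norm{v}_2^2 = \sum_j (w^*_j - \mu)^2$, the (unnormalized) empirical variance of the coordinates of $w^*$. Under the generative model $w^* \sim \Ncal^L(1,1)$ used in this section this quantity concentrates around $L$, so $\norm{v}_2 = O(\sqrt L)$ with high probability; more generally the argument goes through whenever the empirical variance of the entries of $w^*$ is $O(1)$. Plugging in yields $\norm{\what - \overline w^*}_2 = O(\sqrt{L/n})$, with no constraint tying $L$ to $n$, so in particular the bound survives into the over-parameterized regime $L > n$.

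I expect the difficulty to be bookkeeping rather than conceptual. One has to (i) verify cleanly that the sharing-phase iterates actually reach $\chat$ (immediate from strong convexity once the step size and $\tau$ are fixed), and (ii) intersect the deviation bound for $\sum_i s_i r_i$, the lower-tail bound for $\sum_i s_i^2$, and the concentration of $\norm{v}_2$ into a single event on which the conclusion holds. The numerator is the bottleneck for the failure probability: its per-term variance is $L\,\norm{v}_2^2$, which only yields a constant-probability bound via Chebyshev, so if an exponentially small failure probability is wanted one should replace Chebyshev with a sub-exponential (Bernstein-type) tail bound for products of Gaussians, incurring a $\sqrt{\log n}$ factor. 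The decomposition $y_i = \mu\,s_i + r_i$ with $s_i \perp r_i$ is what makes everything explicit; without it one would be estimating a ratio of correlated quantities.
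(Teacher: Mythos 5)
Your proposal is correct and follows essentially the same route as the paper: reduce the sharing phase to the one-dimensional least-squares problem along $\mathrm{span}\{\mathbbm{1}\}$, write the minimizer as a ratio, exploit the independence of $x_i^\top \mathbbm{1}$ and $x_i^\top \widetilde w^*$ (since $\mathbbm{1}\perp\widetilde w^*$), and bound the ratio by $O(1/\sqrt n)$ per coordinate. The only difference is that you spell out the bookkeeping the paper leaves implicit---the GD-converges-to-the-minimizer step, the Chebyshev/chi-squared concentration, and the assumption $\|\widetilde w^*\|_2 = O(\sqrt L)$ hidden in the paper's claim that $x_i^\top\widetilde w^*$ has variance $L$---which is a welcome clarification rather than a new idea.
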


As an example, for $n = L/ 2$, we have $\norm{\widehat w - \overline w^*}_2$ being a dimension-independent constant, whereas $\norm{\overline w^*}_2$ scales as $\Theta(\sqrt{L})$. It shows that training with weight sharing effectively constrains the model complexity, and learns the common component $\overline w^*$ of the optimal solution $w^*$ even in the over-parameterized regime.

Further, let $\Wcal_0^{\perp}$ be the space orthogonal to $\Wcal_0$ and let $\widetilde w^{*} = \Pcal_{\Wcal_0^\perp}(w^*)$, we can decompose the error in the parameter space as
\begin{align*}
  \norm{\widehat w - w^{*}}^{2} = \norm{\Pcal_{\Wcal_{0}}(\widehat w) - \overline w^{*}}^{2} + \norm{\Pcal_{\Wcal_{0}^{\perp}}(\widehat w) - \widetilde w^{*}}^{2}
.\end{align*}
Therefore, training with weight sharing corresponds to minimizing the first term $ \norm{\Pcal_{\Wcal_{0}}(\widehat w) - \overline w^{*}}^{2}$, which brings $\widehat w$ closer to the optimal solution $w^*$, even when the model is over-parameterized. For the subsequent training, the weights are untied and the model is then trained in the original parameter space $\Wcal$. The solution we obtained in $\Wcal_0$ (by weight sharing) can be viewed as a good initialization in the parameter space $\Wcal$.

\subsection{Further Implication for Deep Linear Model}

Here we further show the theoretic insights of sharing weights then unsharing with deep linear model as an example. In particular, we show that training by weight sharing can 1) speed up the convergence, and 2) provides good initialization to subsequent training.

A deep linear network is a series of matrix multiplications
\begin{align*}
    f(\xb; W_1, ..., W_L) = W_LW_{L-1}...W_1\xb,
\end{align*}
with $W_l \in \RR^{d\times d},~~~\ell=1,\ldots,L.$ At the first glance, deep learning models may look trivial since a deep linear model is equivalent to a single matrix. However, when trained with backpropagation, its behavior is analogous to generic deep models. 

The task is to train the deep linear network to learn a target matrix $\Phi\in \RR^{d\times d}$. To focus on the training dynamics, we adopt the simplified objective function
\begin{align*}
    \Rcal(W_1,...W_L) = \frac{1}{2}\norm{W_LW_{L-1}...W_2W_1-\Phi}_F^2.
\end{align*}

Let $\nabla_l\Rcal$ be the gradient of $\Rcal$ with respect to $W_l$. We have
\begin{align*}
    \nabla_l\Rcal = \frac{\partial\Rcal}{\partial W_l} = W_{L:l+1}^T(W_{L:1}-\Phi)W_{l-1:1}^T,
\end{align*}
where $W_{l_2:l_1} = W_{l_2}W_{l_2-1}...W_{l_1+1}W_{l_1}$. The standard gradient update is given by
\begin{align*}
    W_l(t+1) = W_l(t) - \eta\nabla_l\Rcal(t), \quad l = 1,...,L.
\end{align*}
To train with weights shared, all the layers need to have the same initialization. And the update is
\begin{align}\label{eqn:weight_sharing_update}
    W_l(t+1) = W_l(t) - \frac{\eta}{L}\sum_{i=1}^L\nabla_i\Rcal(t), \quad l = 1,...,L.
\end{align}

\textbf{Sharing Weights Brings Faster Convergence}

Since the initialization and updates are the same for all layers, the parameters $W_1(t),...,W_L(t)$ are equal for all $t$. For simplicity, we denote the weight at time $t$ to be $W_0(t)$. Notice that the gradients are averaged, the norm of the update to each layer doesn't scale with $L$.

We first consider the case where the target matrix $\Phi$ is positive definite. It is immediate that $\Phi^{1/L}$ is a solution to the deep linear network. We study the convergence result with continuous-time gradient descent (with extension to discrete-time gradient descent deferred to appendix), which demonstrates the benefit of training with weight sharing when learning a positive definite matrix $\Phi$. We draw a comparison with training with zero-asymmetric (ZAS) initialization \citep{wu2019global}. To the best of our knowledge, ZAS gives the state-of-the-art convergence rate. It is actually the only work showing the global convergence of deep linear networks trained by gradient descent.

With continuous-time gradient descent (i.e. $\eta \rightarrow 0$), the training dynamics of continuous-time gradient descent can be described as
\begin{align*}
    \frac{d W_l(t)}{dt} = \dot W_l(t) = -\nabla_l\Rcal(t), \quad l = 1,...,L, \quad t\ge 0.
\end{align*}
\begin{fact}[Continuous-time gradient descent without weight sharing \citep{wu2019global}]\label{thm:no_weight_sharing}
For the deep linear network $f(\xb; W_1, ..., W_L) = W_LW_{L-1}...W_1\xb$, the continuous time gradient descent with the zero-asymmetric initialization satisfies $\Rcal(t) \le \exp\rbr{-2t}\Rcal(0).$
\end{fact}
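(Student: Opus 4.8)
The plan is to run the classical gradient-flow energy argument, with the role of the zero-asymmetric (ZAS) initialization compressed into a single conditioning fact about partial products. Along the flow $\dot W_l(t) = -\nabla_l\Rcal(t)$, the chain rule gives the dissipation identity
\[
  \frac{d}{dt}\Rcal(t) \;=\; \sum_{l=1}^{L}\inner{\nabla_l\Rcal(t)}{\dot W_l(t)} \;=\; -\sum_{l=1}^{L}\fnorm{\nabla_l\Rcal(t)}^{2}.
\]
Hence it is enough to establish a gradient-domination inequality $\sum_{l=1}^{L}\fnorm{\nabla_l\Rcal(t)}^{2}\ge 2\,\Rcal(t)$ for all $t\ge 0$; Gr\"onwall's lemma then yields $\Rcal(t)\le e^{-2t}\Rcal(0)$ at once.

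For the gradient-domination step, write $E(t) = W_{L:1}(t)-\Phi$ for the end-to-end residual, so $\Rcal(t) = \tfrac12\fnorm{E(t)}^{2}$. Using the formula $\nabla_l\Rcal = W_{L:l+1}^{T}E\,W_{l-1:1}^{T}$ from above and keeping only the outermost layer $l=L$, where $W_{L:L+1}$ is the empty product $I$, we get $\nabla_L\Rcal = E\,W_{L-1:1}^{T}$ and therefore
\[
  \fnorm{\nabla_L\Rcal}^{2} \;=\; \tr\!\big(E^{T}E\,W_{L-1:1}^{T}W_{L-1:1}\big) \;\ge\; \sigma_{\min}\!\big(W_{L-1:1}\big)^{2}\,\fnorm{E}^{2} \;=\; 2\,\sigma_{\min}\!\big(W_{L-1:1}\big)^{2}\,\Rcal(t).
\]
So the whole inequality follows once we know the partial product stays well conditioned, i.e. $\sigma_{\min}(W_{L-1:1}(t))\ge 1$ for every $t\ge 0$ (symmetrically one may also use the $l=1$ term and $W_{L:2}$).

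This is exactly the property the ZAS initialization is designed to enforce, via the conservation laws of linear-network gradient flow: for each $l$, the matrix $W_{l+1}(t)^{T}W_{l+1}(t)-W_l(t)W_l(t)^{T}$ is constant along the trajectory. I would (i) record these invariants; (ii) verify that the ZAS choice of $W_1(0),\dots,W_L(0)$ makes the end-to-end map vanish, so $E(0)=-\Phi$ and $\Rcal(0)=\tfrac12\fnorm{\Phi}^{2}$; and (iii) use the explicit (block, alternating-sign) form of the ZAS weights together with the invariants to propagate $\sigma_{\min}(W_{l:1}(t))\ge 1$ and $\sigma_{\min}(W_{L:l}(t))\ge 1$ for all $t$ --- intuitively, the built-in asymmetry keeps every partial product full rank even while the full product $W_{L:1}$ is still close to $0$. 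Combining (iii) with the previous display gives $\sum_{l}\fnorm{\nabla_l\Rcal(t)}^{2}\ge\fnorm{\nabla_L\Rcal(t)}^{2}\ge 2\,\Rcal(t)$, which is what we needed.

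The main obstacle is step (iii): upgrading the conditioning of the partial products from $t=0$ to a uniform-in-time bound. The conservation laws alone fix $W_{l+1}^{T}W_{l+1}-W_lW_l^{T}$ but not the individual spectra of the $W_{l:1}$ and $W_{L:l}$, so one must combine them with the monotone decay of $\Rcal$ (hence boundedness of the iterates) and the precise ZAS structure to exclude spectral collapse of any partial product. This is the technical heart of the argument in \citep{wu2019global}; the dissipation identity and the Gr\"onwall step wrapped around it are routine.
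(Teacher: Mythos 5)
This statement is quoted in the paper as a Fact imported from \citet{wu2019global}; the paper itself gives no proof of it, so there is no in-paper argument to compare against --- your sketch is in effect a reconstruction of Wu et al.'s own proof, and its skeleton is the right one. The dissipation identity, the reduction to the single term $l=L$, and the bound $\fnorm{\nabla_L\Rcal}^2 \ge \sigma_{\min}\rbr{W_{L-1:1}}^2\cdot 2\Rcal(t)$ are all correct, and $\sigma_{\min}\rbr{W_{L-1:1}(t)}\ge 1$ is indeed exactly what the zero-asymmetric initialization delivers. Two corrections to your step (iii), though. First, the ZAS initialization (in the square case relevant here) is not of ``block, alternating-sign'' form: it sets $W_l(0)=I$ for $l<L$ and $W_L(0)=0$, so the end-to-end product vanishes while all lower partial products are perfectly conditioned at time zero. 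Second, you overstate the remaining difficulty: the conservation laws alone already close the argument, with no appeal to boundedness of the iterates or to the monotone decay of $\Rcal$. Indeed $W_{l+1}(t)^{\top}W_{l+1}(t)-W_l(t)W_l(t)^{\top}$ is constant along the flow, and at the ZAS initialization it equals $0$ for $l\le L-2$ and $-I$ for $l=L-1$; hence for all $t$ one has $W_{L-1}W_{L-1}^{\top}=W_L^{\top}W_L+I\succeq I$, and since $W^{\top}W$ and $WW^{\top}$ share eigenvalues for square $W$, this lower bound cascades through the zero-imbalance interfaces to give $\sigma_{\min}\rbr{W_l(t)}\ge 1$ for every $l\le L-1$, whence $\sigma_{\min}\rbr{W_{L-1:1}(t)}\ge 1$, $\dot\Rcal(t)\le -2\Rcal(t)$, and the claim follows by Gr\"onwall. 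One small slip: your parenthetical alternative of using the $l=1$ term with $W_{L:2}$ does not work, because that partial product contains $W_L$, which is initialized at $0$ and is precisely the factor whose smallest singular value is not bounded below; only the top-layer term yields the gradient-domination inequality under ZAS.
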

Fact \ref{thm:no_weight_sharing} shows that with the zero-asymmetric initialization, the continuous gradient descent linearly converges to the global optimal solution for general target matrix $\Phi$.

We have the following convergence result for training with weight sharing.

\begin{theorem}[Continuous-time gradient descent with weight sharing]\label{thm:weight_sharing} For the deep linear network $f(\xb; W_1, ..., W_L) = W_LW_{L-1}...W_1\xb$, initialize all $W_l(0)$ with identity matrix $I$ and update according to Equation \ref{eqn:weight_sharing_update}. With a positive definite target matrix $\Phi$, the continuous-time gradient descent satisfies $\Rcal(t) \le \exp\rbr{-2L\min(1, \lambda^2_\text{min}(\Phi))t}\Rcal(0)$.
\end{theorem}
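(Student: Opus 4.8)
The plan is to exploit the weight-sharing symmetry to reduce the $L$-matrix dynamics to a single matrix, diagonalize in the eigenbasis of $\Phi$, and finish with a Gr\"onwall estimate on the resulting decoupled scalar ODEs. As already observed in the text, identical initialization and identical (averaged) updates keep all layers equal, $W_1(t)=\dots=W_L(t)=:W_0(t)$, with $W_0(0)=I$; hence $W_{L:1}(t)=W_0(t)^L$, $\Rcal(t)=\tfrac12\fnorm{W_0(t)^L-\Phi}^2$, and substituting the common value into the formula for $\nabla_i\Rcal$ turns the continuous-time dynamics into
\begin{align*}
 \dot W_0(t)=-\frac1L\sum_{i=1}^L\rbr{W_0(t)^\top}^{L-i}\rbr{W_0(t)^L-\Phi}\rbr{W_0(t)^\top}^{i-1},\qquad W_0(0)=I.
\end{align*}

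Next I would show that the linear subspace $\Vcal=\{M\in\RR^{d\times d}:M=M^\top\text{ and }M\Phi=\Phi M\}$ is invariant under this flow. For $M\in\Vcal$ the matrices $M$, $M^L$, $M^L-\Phi$ are symmetric and pairwise commuting, so each summand above equals $M^{L-1}\rbr{M^L-\Phi}\in\Vcal$; since the right-hand side is polynomial (hence locally Lipschitz) in $W_0$ and $W_0(0)=I\in\Vcal$, the solution stays in $\Vcal$, where the dynamics simplifies to $\dot W_0=-W_0^{L-1}\rbr{W_0^L-\Phi}$. Writing $\Phi=Q\,\diag(\lambda_1,\dots,\lambda_d)\,Q^\top$ with all $\lambda_j>0$, and using that a symmetric $\Phi$-commuting matrix is diagonalized by the same $Q$, the eigenvalues $a_j(t)$ of $W_0(t)$ obey the decoupled scalar ODEs $\dot a_j=-a_j^{L-1}\rbr{a_j^L-\lambda_j}$ with $a_j(0)=1$. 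A sign analysis shows each $a_j$ is monotone and confined to the compact interval with endpoints $1$ and $\lambda_j^{1/L}$; in particular $a_j(t)\ge\min\rbr{1,\lambda_j^{1/L}}>0$ for all $t\ge0$, which also precludes finite-time blow-up.

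Finally, set $r_j(t)=a_j(t)^L-\lambda_j$, so that $\Rcal(t)=\tfrac12\sum_j r_j^2$ and $\dot r_j=La_j^{L-1}\dot a_j=-La_j^{2(L-1)}r_j$, giving $\frac{d}{dt}\Rcal(t)=-L\sum_j a_j(t)^{2(L-1)}r_j(t)^2$. Using $a_j(t)\ge\min\rbr{1,\lambda_j^{1/L}}\ge\min\rbr{1,\lambda_{\min}(\Phi)^{1/L}}$ together with $2(L-1)/L\le2$, one verifies $a_j(t)^{2(L-1)}\ge\min\rbr{1,\lambda_{\min}^2(\Phi)}$ for every $j,t$; therefore $\frac{d}{dt}\Rcal\le-2L\min\rbr{1,\lambda_{\min}^2(\Phi)}\Rcal$, and Gr\"onwall's inequality yields $\Rcal(t)\le\exp\rbr{-2L\min\rbr{1,\lambda_{\min}^2(\Phi)}t}\Rcal(0)$. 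I expect the main obstacle to be the invariance step — rigorously establishing that $W_0(t)$ remains symmetric and $\Phi$-commuting, which is exactly what legitimizes the diagonalization and the entire scalar reduction — together with the uniform lower bound $a_j(t)^{2(L-1)}\ge\min\rbr{1,\lambda_{\min}^2(\Phi)}$; the symmetry collapse to $W_0$ and the final Gr\"onwall step are routine.
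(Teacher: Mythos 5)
Your proposal is correct and follows essentially the same route as the paper: collapse the shared layers to a single matrix dynamics $\dot W_0=-W_0^{L-1}\rbr{W_0^L-\Phi}$, lower-bound $\lambda_{\min}(W_0^{L-1}(t))$ by $\min(1,\lambda_{\min}(\Phi))$, and conclude with Gr\"onwall. The only difference is that you rigorously justify the symmetry/$\Phi$-commutation invariance and the eigenvalue confinement via the decoupled scalar ODEs, steps the paper's proof asserts as ``easy to see.''
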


Training with ZAS, the loss decays as $\Rcal(t) \le \exp(-2t)\Rcal(0)$, whereas for training with weight sharing, the loss is $\Rcal(t) \le \exp(-2L\min(1, \lambda_\text{min}(\Phi))t)\Rcal(0)$. The extra $L$ in the exponent demonstrates the acceleration of training with weight sharing. See
Appendix \ref{subsec:discrete_time} for the extension to discrete-time gradient descent.

\begin{remark}
  The difference between convergence rates in Fact \ref{thm:no_weight_sharing} and Theorem \ref{thm:weight_sharing} is not an artifact of analysis. For example, when the target matrix is simply $\Phi = \alpha I, \alpha > 1$. It can be explicitly shown that with the initialization in Fact \ref{thm:no_weight_sharing}, we have $\dot\Rcal(0) = -2\Rcal(0)$ while training with weight sharing (Theorem \ref{thm:weight_sharing}), we have $\dot\Rcal(0) = -2L\Rcal(0)$. This implies that the convergence results in Fact \ref{thm:no_weight_sharing} and Theorem \ref{thm:weight_sharing} cannot be improved in general.
\end{remark}

\textbf{Sharing Weights Provides Good Initialization}

In this subsection, we show that weight sharing can provably improve the initialization for training deep linear models, which can bring significant improvement on existing convergence results. Now we consider the case for which the target matrix $\Phi$ may not be a positive definite matrix. In the rest of the analysis, we denote $\Rcal(\widehat W)$ to be the loss induced by parameter $\widehat W$. The local convergence result has been established as
\begin{fact}[Theorem 1 of \citep{arora2018convergence}]\label{fact:local_convergence}
  For any initialization $\widehat W$ with $\norm{\widehat W_{i+1}^{\top}\widehat W_{i+1} - \widehat W_{i}\widehat W_{i}^{\top}}_{F} \le \delta, \forall i$ and $\sqrt{2 \Rcal(\widehat W)} \le \sigma_{min}\rbr{\Phi} - c$ for some $c > 0$. With proper choice of step size $\eta$, training $\widehat W$ with gradient descent converges to $\Phi$.
\end{fact}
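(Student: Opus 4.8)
The plan is to follow the standard ``deficiency margin plus balancedness'' route for deep linear networks. Write $W_{L:1}$ for the end-to-end product and note $\sqrt{2\Rcal(\widehat W)} = \fnorm{W_{L:1}-\Phi}$, so the two hypotheses say the trajectory starts approximately balanced, $\max_i \fnorm{\widehat W_{i+1}^{\top}\widehat W_{i+1} - \widehat W_i\widehat W_i^{\top}}\le \delta$, and with end-to-end deficiency margin $\fnorm{W_{L:1}(0)-\Phi}\le \sigma_{min}(\Phi)-c$. First I would fix the three quantities to track along gradient descent: the loss $\Rcal(t)$, the worst-layer balancedness defect $\Delta(t)=\max_i \fnorm{W_{i+1}(t)^{\top}W_{i+1}(t)-W_i(t)W_i(t)^{\top}}$, and the least singular value $\sigma_{min}(W_{L:1}(t))$ of the product.

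Step one: balancedness is nearly conserved. For gradient flow $\frac{d}{dt}\rbr{W_{i+1}^{\top}W_{i+1}-W_iW_i^{\top}}=0$ identically, so $\Delta(t)\equiv\Delta(0)\le\delta$; for a discrete step of size $\eta$ the $O(\eta)$ terms cancel (because $W_{i+1}^{\top}\nabla_{i+1}\Rcal=\nabla_i\Rcal\,W_i^{\top}$ and its transpose), leaving a change of size $O(\eta^2)$ times a polynomial in the current layer norms, so a small enough $\eta$ keeps $\Delta(t)\le 2\delta$ as long as the weight norms stay bounded. Step two: the margin is preserved. Show $\Rcal$ is non-increasing --- immediate for gradient flow, and for gradient descent from a descent lemma once a local smoothness bound on $\nabla\Rcal$ is available (weight norms stay bounded because $\Rcal$ is bounded and near-balancedness forces each $\fnorm{W_i}$ to be comparable to $\fnorm{W_{L:1}}^{1/L}$). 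Hence $\fnorm{W_{L:1}(t)-\Phi}\le\sigma_{min}(\Phi)-c$ for all $t$, and Weyl's inequality ($\sigma_{min}(A)\ge\sigma_{min}(B)-\fnorm{A-B}$) gives $\sigma_{min}(W_{L:1}(t))\ge c$ throughout.

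Step three is the heart: a gradient-domination (Polyak--\L{}ojasiewicz) bound. With near-balancedness the factors $W_{L:i+1}W_{L:i+1}^{\top}$ and $W_{i-1:1}^{\top}W_{i-1:1}$ are, up to an $O(\delta)$ slack, the $\tfrac{L-i}{L}$ and $\tfrac{i-1}{L}$ powers of $W_{L:1}W_{L:1}^{\top}$ and $W_{L:1}^{\top}W_{L:1}$; plugging this into $\nabla_i\Rcal=W_{L:i+1}^{\top}(W_{L:1}-\Phi)W_{i-1:1}^{\top}$ and summing over $i$ yields $\sum_i\fnorm{\nabla_i\Rcal}^2\ge\rbr{2Lc^{2(L-1)/L}-O(\delta)}\Rcal$. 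Equivalently, on the end-to-end matrix the induced flow is preconditioned, $\dot W_{L:1}=-\sum_i(W_{L:1}W_{L:1}^{\top})^{(L-i)/L}(W_{L:1}-\Phi)(W_{L:1}^{\top}W_{L:1})^{(i-1)/L}+O(\delta)$, whose preconditioner has least eigenvalue $\gtrsim L\,\sigma_{min}(W_{L:1})^{2(L-1)/L}$, so $\dot\Rcal\le-\rbr{2Lc^{2(L-1)/L}-O(\delta)}\Rcal$. Step four: assemble. In continuous time this integrates to $\Rcal(t)\le e^{-(2Lc^{2(L-1)/L}-O(\delta))t}\Rcal(0)\to0$, hence $W_{L:1}(t)\to\Phi$; in discrete time, combining the descent lemma with the PL bound gives $\Rcal(t+1)\le\rbr{1-\eta\cdot\mathrm{const}(c,\delta,L,\norm{\Phi})}\Rcal(t)$, and an induction on $t$ carries the invariants (bounded norms, $\Delta\le2\delta$, margin) forward one step at a time.

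The main obstacle is the discrete-time bookkeeping: one must pin down a single step-size threshold that simultaneously validates the descent lemma, prevents $\Delta(t)$ from growing past $2\delta$, and preserves the deficiency margin, while all three quantities appear in each other's bounds; one must also keep the $\delta$-dependence of the constants under control so that the PL constant $2Lc^{2(L-1)/L}-O(\delta)$ stays strictly positive (forcing $\delta$ small relative to $c^{2(L-1)/L}$). The gradient-flow version is comparatively clean, and the effort of making the induction airtight is exactly what the cited analysis of Arora et al. carries out.
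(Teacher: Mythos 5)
The paper does not prove this statement at all---it is imported verbatim as Theorem 1 of \citet{arora2018convergence}---and your outline is a faithful reconstruction of exactly that cited analysis: the deficiency margin $\sqrt{2\Rcal}\le\sigma_{min}(\Phi)-c$ preserved by monotone loss, approximate balancedness preserved up to $O(\eta^2)$ per step, and the resulting PL-type bound $\sum_i\fnorm{\nabla_i\Rcal}^2\gtrsim L\,c^{2(L-1)/L}\,\Rcal$ driving linear convergence, assembled by a discrete-time induction. So your approach is essentially the same as the source's; the one caveat, which you already flag yourself, is that the Fact as stated here suppresses the quantitative requirement that $\delta$ (and the step size) be small relative to $c$, which the full induction in \citet{arora2018convergence} makes explicit.
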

This result shows that starting with a good initialization, i.e. layer-wise similar and small initial loss, the deep linear model converges to the optimal solution. Next we show that training with weight-sharing navigates itself to a good initialization and easily improves the above result. We take reparameterization $W_{i} = \frac{1}{\sqrt{L}}\cdot W_0 + \widetilde W_{i},$
where $W_0= \frac{1}{2}\rbr{\Gamma+\Gamma^{\top}}$ is the \textit{stem-direction} forced to be symmetric.
\begin{theorem}[Weight-sharing provably improves initialization]\label{thm:global_convergence} For any target matrix $\Phi$ with relatively small distance to a positive definite matrix $\Phi_{0}$ as $\norm{\Phi - \Phi_{0}}_{F} \le \frac{\sigma_{min}(\Phi_{0})}{3}$. Initialize all $\widehat W_{i}$ to be identity matrix $I^{d}$ and train with weight-sharing, with proper choice of $\beta$, the gradient descent converges to $\Phi$.
\end{theorem}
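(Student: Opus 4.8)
The plan is to let phase~1 (weight sharing) run only until it has produced an initialization satisfying the hypotheses of Fact~\ref{fact:local_convergence} for the target $\Phi$, and then invoke Fact~\ref{fact:local_convergence} for phase~2 (the unshared gradient descent). Throughout phase~1 all layers receive the same averaged update, so they stay equal; write $\widehat W(t)$ for their common value, and observe that $\widehat W(t)$ remains symmetric because the stem-direction is forced to be symmetric and the branch-directions are held at $0$. Symmetry of $\widehat W$ makes $\widehat W(t)^\top\widehat W(t)-\widehat W(t)\widehat W(t)^\top=0$, so the ``balancedness'' condition of Fact~\ref{fact:local_convergence} holds with $\delta=0$ regardless of where we stop phase~1. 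Thus the whole job is to show that phase~1 reaches a point $\widehat W_\infty$ with small residual $\sqrt{2\Rcal(\widehat W_\infty)}=\|\widehat W_\infty^{\,L}-\Phi\|_F\le\sigma_{\min}(\Phi)-c$ for some $c>0$; then Fact~\ref{fact:local_convergence} finishes the proof.

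First I would identify the phase-1 dynamics: with all layers equal to $\widehat W$, the averaged gradient equals $\frac1L\sum_{k=1}^{L}\widehat W^{\,L-k}(\widehat W^{\,L}-\Phi)\widehat W^{\,k-1}=\frac1L\nabla g(\widehat W)$ with $g(\widehat W):=\frac12\|\widehat W^{\,L}-\Phi\|_F^2$, so phase~1 is gradient descent on $g$ over symmetric matrices started from $I$ (equivalently, with the symmetric part of $\Phi$ as target, since the antisymmetric part is killed by the symmetry constraint and only adds the fixed amount $\|\Phi_{\mathrm{asym}}\|_F^2\le\|\Phi-\Phi_0\|_F^2$ to the loss). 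I would analyze this as a perturbation of the flow with target $\Phi_0$: exactly as in the proof of Theorem~\ref{thm:weight_sharing}, that flow keeps $\widehat W(t)$ commuting with $\Phi_0$, decouples into the scalar ODEs $\dot w=-w^{L-1}(w^{L}-\lambda)$ over the joint eigenvalues, keeps those eigenvalues in a fixed compact subset of $(0,\infty)$, and converges linearly to $\Phi_0^{1/L}$. Since $\|\Phi-\Phi_0\|_F\le\tfrac13\sigma_{\min}(\Phi_0)$, I would follow this unperturbed flow into a neighborhood of $\Phi_0^{1/L}$ on which $g$ satisfies a local contraction/PL estimate robust to the $O(\|\Phi-\Phi_0\|_F)$ perturbation of the gradient, and conclude that the actual phase-1 iterates stay symmetric positive definite and converge to $\widehat W_\infty$ with $\|\widehat W_\infty^{\,L}-\Phi\|_F\le\|\Phi-\Phi_0\|_F(1+o(1))$; ``proper choice of $\beta$'' (the step size and duration of the weight-sharing phase) absorbs the discretization and finite-stopping error into the $o(1)$. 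A convenient sanity check is that $\Phi_0^{1/L}$ is itself a feasible symmetric point with $g(\Phi_0^{1/L})=\tfrac12\|\Phi_0-\Phi\|_F^2$, so the value phase~1 is driving toward is at most $\tfrac12\|\Phi-\Phi_0\|_F^2$.

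Finally I would assemble the estimate. By Weyl's inequality, $\sigma_{\min}(\Phi)\ge\sigma_{\min}(\Phi_0)-\|\Phi-\Phi_0\|_2\ge\sigma_{\min}(\Phi_0)-\|\Phi-\Phi_0\|_F\ge\tfrac23\sigma_{\min}(\Phi_0)$, while phase~1 gives $\sqrt{2\Rcal(\widehat W_\infty)}\le\|\Phi-\Phi_0\|_F(1+o(1))\le\tfrac13\sigma_{\min}(\Phi_0)(1+o(1))$. Choosing $\beta$ so that the $o(1)$ is at most $\tfrac12$ yields $\sqrt{2\Rcal(\widehat W_\infty)}\le\sigma_{\min}(\Phi)-c$ with $c=\tfrac16\sigma_{\min}(\Phi_0)>0$; combined with $\delta=0$, the weights $\widehat W_i=\widehat W_\infty$ meet both hypotheses of Fact~\ref{fact:local_convergence}, so the unshared gradient descent with a suitable step size converges to $\Phi$.

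The step I expect to be the main obstacle is the phase-1 convergence claim for a target that is not positive definite: once $\Phi$ no longer commutes with $\widehat W(t)$, the scalar reduction behind Theorem~\ref{thm:weight_sharing} fails and one must control a genuinely matrix-valued non-convex gradient flow — simultaneously ruling out loss of rank (keeping $\widehat W(t)$ positive definite) and showing the limiting residual is of order $\|\Phi-\Phi_0\|_F$ rather than of order $\|I-\Phi\|_F$. The balancedness observation and the Weyl bookkeeping are routine by comparison.
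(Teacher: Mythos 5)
Your overall architecture is the paper's: phase~1 keeps the layers equal and symmetric, so the balancedness condition of Fact~\ref{fact:local_convergence} holds with $\delta=0$; phase~1 is gradient descent on $\frac12\|\widehat W^L-\Phi\|_F^2$ over symmetric matrices, whose effective target is the symmetric part $\Phi_s=\frac12(\Phi+\Phi^\top)$; the limiting residual is controlled by $\|\Phi-\Phi_0\|_F\le\frac13\sigma_{\min}(\Phi_0)$; and Weyl plus Fact~\ref{fact:local_convergence} finish phase~2. That is exactly the paper's decomposition, and your Weyl bookkeeping at the end is fine.

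The gap is in the one step you yourself flag as the main obstacle, and it is an obstacle you created rather than one inherent to the problem. You propose to prove phase-1 convergence by viewing the dynamics as a perturbation of the flow with target $\Phi_0$, and then invoke an unspecified ``local contraction/PL estimate robust to the $O(\|\Phi-\Phi_0\|_F)$ perturbation''; as written this is not carried out, and your closing worry (``once $\Phi$ no longer commutes with $\widehat W(t)$, the scalar reduction fails, and one must control a genuinely matrix-valued non-convex flow'') shows the sketch is genuinely incomplete at its center. But your own parenthetical already contains the resolution the paper uses: on symmetric iterates the antisymmetric part of $\Phi$ only shifts the loss by the constant $\frac12\|\Phi_{\mathrm{asym}}\|_F^2$, so phase~1 is \emph{exactly} the weight-sharing dynamics of Theorem~\ref{thm:weight_sharing} (or Theorem~\ref{thm:discrete_weight_sharing} in discrete time) with target $\Phi_s$ in place of $\Phi$. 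The commutation needed for the scalar/eigenvalue reduction is with the effective target $\Phi_s$, not with $\Phi$: since $\Phi_s$ is symmetric, $\widehat W(0)=I$, and every update is a polynomial in $\widehat W$ and $\Phi_s$, all iterates commute with $\Phi_s$; and $\Phi_s$ is positive definite because $\lambda_{\min}(\Phi_s)\ge\sigma_{\min}(\Phi_0)-\|\Phi-\Phi_0\|_F\ge\frac23\sigma_{\min}(\Phi_0)>0$. Hence those theorems apply verbatim, $\widehat W^L\to\Phi_s$, and the phase-1 residual is $\|\Phi_s-\Phi\|_F=\|\Phi_{\mathrm{asym}}\|_F\le\|\Phi-\Phi_0\|_F$ with no $(1+o(1))$ fudge needed (only a finite-stopping $\epsilon$). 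Replacing your perturbation-of-$\Phi_0$ argument with this direct reduction closes the gap and recovers the paper's proof; without that replacement, the central convergence claim of your phase~1 remains unproved.
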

We show that when trained with weight sharing, the model first learns the commonly shared component of all layers, and converges to the symmetric matrix $\frac{1}{2}\rbr{\Phi + \Phi^\top}$. After untying, the convergence then follows from Fact \ref{fact:local_convergence}, as the weights are already close to the optimal solution. See full proof in the appendix.

It is interesting that weight-sharing easily converts the established local convergence result to global convergence for a large set of target matrix $\Phi$. This demonstrates that training with weight-sharing can find a good initialization, which brings huge benefits to subsequent training. 
\section{Related Work}
\label{sec:related}

\citet{lan2020albert} propose ALBERT with the weights being shared across all its transformer layers.  Large ALBERT models can achieve good performance on several natural language understanding benchmarks. \citet{bai2019trellis} propose trellis networks which are temporal convolution networks with shared weights and obtain good results for language modeling. This line of work is then extended to deep equilibrium models \citep{bai2019deep}  which are equivalent to infinite-depth weight-tied feedforward networks. \citet{dabre2019recurrent} show that the translation quality of a model that recurrently stacks a single layer is comparable to having the same number of separate layers. \citet{zhang2020deeper} also demonstrate the application of weight-sharing in neural architecture search.

There are also a large number of algorithms proposed for the fast training of deep learning models. Net2Net \citep{chen2015net2net} achieves acceleration by training small models first then transferring the knowledge to larger models, which can be viewed as sharing weights within the same layer. Similarly, \citet{chang2018multi} propose to view the residual network as a dynamical system, and start training with a shallow model, and double the depth of the model by splitting each layer into 2 layers and halving the weights in the residual connection. Progressive stacking \citep{gong2019efficient} focuses on the fast training of BERT. The algorithm also starts with training a shallow model, then grows the model by copying the shallow model and stack new layers on top of it. It empirically shows great training acceleration. \citet{dongtowards} demonstrate it is possible to train an adaptively growing network to achieve acceleration. 

\section{Experiments}
\label{sec:experiments}

In this section, we first present the experimental setup and results for training the BERT-large model with and without our proposed Sharing WEights (SWE) method. Then, we show ablation studies of how different untying point values ($\tau$) affect the final performance, and how our method works with different model sizes, etc. In what follows, without explicit clarification, BERT always means the BERT-large model.

\begin{table*}[h]
\centering
\caption{Experiment results using BERT. Models trained with SWE consistently outperforms models trained without SWE.}
\vspace{3mm}
\label{tab:swe_bert}
\begin{tabular}{l|cc|cc|cc|cc}
         & \multicolumn{4}{c|}{English Wikipedia + BookCorpus} & \multicolumn{4}{c}{XLNet data} \\ \cline{2-9}
         & \multicolumn{2}{c|}{0.5 m iterations} & \multicolumn{2}{c|}{1 m iterations} & \multicolumn{2}{c|}{0.5 m iterations} & \multicolumn{2}{c}{1 m iterations} \\
         & Baseline & SWE & Baseline & SWE & Baseline & SWE & Baseline & SWE \\ \hline
Pretrain MLM (acc.\%) & 73.66 & \textbf{73.92} & 74.98 & \textbf{75.09} & 70.06 & \textbf{70.18} & 71.75 & \textbf{71.76} \\ \hline
SQuAD average & 87.27 & \textbf{88.34} & 88.82 & \textbf{89.51} & 88.85 & \textbf{89.20} & 89.93 & \textbf{90.09} \\
GLUE average & 78.17 & \textbf{78.98} & 79.30 & \textbf{80.03} & 79.53 & \textbf{79.97} & 80.20 & \textbf{80.83} \\ \hline
SQuAD v1.1 (F-1\%) & 91.54 & \textbf{92.54} & 92.58 & \textbf{92.81} & 92.41 & \textbf{92.82} & 93.37 & \textbf{93.55} \\
SQuAD v2.0 (F-1\%) & 82.99 & \textbf{84.14} & 85.06 & \textbf{86.20} & 85.28 & \textbf{85.57} & 86.49 & \textbf{86.63} \\
GLUE/AX (corr\%) & 39.0 & \textbf{40.1} & 42.3 & \textbf{43.8} & 43.2 & \textbf{43.7} & 44.2 & \textbf{46.8}  \\
GLUE/MNLI-m (acc.\%) & 85.9 & \textbf{87.2} & 86.9 & \textbf{87.8} & 87.3 & \textbf{88.4} & 88.6 & \textbf{88.8}  \\
GLUE/MNLI-mm (acc.\%) & 85.4 & \textbf{86.7} & 86.1 & \textbf{87.4} & \textbf{87.4} & \textbf{87.4} & 87.9 & \textbf{88.5}  \\
GLUE/QNLI (acc.\%) & 92.3 & \textbf{93.4} & 93.5 & \textbf{94.0} & 91.9 & \textbf{92.8} & 92.3 & \textbf{93.1}  \\
GLUE/QQP (F-1\%) & \textbf{72.1} & 71.7 & 72.2 & \textbf{72.3} & 71.7 & \textbf{72.1} & \textbf{72.3} & 72.1  \\
GLUE/SST-2 (acc.\%) & 94.3 & \textbf{94.8} & 94.8 & \textbf{94.9} & \textbf{95.7} & 95.4 & \textbf{95.9} & 95.7  \\
\end{tabular}
\end{table*}

\subsection{Experimental Setup}
We use the TensorFlow official implementation of BERT \citep{tf_model_garden}. We first show experimental results with English Wikipedia and BookCorpus for pre-training as in the original BERT paper \citep{devlin2018bert}. We then move to the XLNet enlarged pre-training dataset \citep{yang2019xlnet}. We preprocess all datasets with WordPiece tokenization \citep{schuster2012japanese}. We mask 15\% tokens in each sequence. For experiments on English Wikipedia and BookCorpus, we randomly choose tokens to mask. For experiments on the XLNet dataset, we do whole word masking -- in case that a word is broken into multiple tokens, either all tokens are masked or not masked. For all experiments, we set both the batch size and sequence length to 512.

\begin{table*}[h]
\centering
\caption{Experiment results using BERT with alternative embedding sizes, on the English Wikipedia + BookCorpus dataset. For results with the original embedding size of 1024, see Table. \ref{tab:swe_bert}. The proposed SWE approach works for different embedding sizes.}
 \vspace{3mm}
\label{tab:swe_embed_size}
\begin{tabular}{l|cc|cc|cc|cc}
         & \multicolumn{4}{c}{Embedding size = 768} & \multicolumn{4}{|c}{Embedding size = 1536} \\ \cline{2-9}
         & \multicolumn{2}{c|}{0.5 m iterations} & \multicolumn{2}{c|}{1 m iterations} & \multicolumn{2}{c|}{0.5 m iterations} & \multicolumn{2}{c}{1 m iterations} \\
         & Baseline & SWE & Baseline & SWE & Baseline & SWE & Baseline & SWE \\ \hline
Pretrain MLM (acc.\%) & 71.68 & \textbf{72.03} & 73.00 & \textbf{73.10} & 75.61 & \textbf{75.79} & \textbf{77.23} & 77.16 \\ \hline
SQuAD average & 86.65 & \textbf{87.00} & 87.38 & \textbf{87.97} & 88.32 & \textbf{89.68} & 89.38 & \textbf{89.79} \\
GLUE average & 77.30 & \textbf{77.98} & 78.13 & \textbf{79.28} & 79.00 & \textbf{80.08} & 80.37 & \textbf{80.88} \\ \hline
SQuAD v1.1 (F-1\%) & 91.29 & \textbf{91.51} & 91.95 & \textbf{91.99} & 92.29 & \textbf{93.19} & 92.76 & \textbf{93.30} \\
SQuAD v2.0 (F-1\%) & 82.00 & \textbf{82.48} & 82.80 & \textbf{83.95} & 84.35 & \textbf{86.16} & 86.00 & \textbf{86.28} \\
GLUE/AX (corr\%) & 36.4 & \textbf{38.8} & 37.8 & \textbf{42.9} & 41.7 & \textbf{44.2} & 44.6 & \textbf{46.7}  \\
GLUE/MNLI-m (acc.\%) & 85.4 & \textbf{86.0} & 86.1 & \textbf{87.0} & 87.1 & \textbf{87.8} & 88.1 & \textbf{88.5}  \\
GLUE/MNLI-mm (acc.\%) & 85.0 & \textbf{85.9} & 86.0 & \textbf{86.6} & 86.5 & \textbf{87.4} & 87.2 & \textbf{87.8}  \\
GLUE/QNLI (acc.\%) & 92.0 & \textbf{92.5} & \textbf{93.0} & 92.8 & 92.9 & \textbf{93.6} & \textbf{94.1} & \textbf{94.1}  \\
GLUE/QQP (F-1\%) & \textbf{71.5} & \textbf{71.5} & 71.8 & \textbf{72.0} & 71.8 & \textbf{71.9} & \textbf{72.6} & 72.3  \\
GLUE/SST-2 (acc.\%) & \textbf{93.5} & 93.2 & 94.1 & \textbf{94.4} & 94.0 & \textbf{95.6} & 95.6 & \textbf{95.9}  \\
\end{tabular}
\end{table*}

\begin{table*}[h]
\centering
\caption{Experiment results using BERT-base (12-layer with embedding size 768), on the English Wikipedia + BookCorpus dataset. The proposed SWE method also improves the performance of BERT-base.}
 \vspace{3mm}
\label{tab:swe_bert_base}
\begin{tabular}{l|cc|cc}
         & \multicolumn{4}{c}{BERT-base} \\ \cline{2-5}
         & \multicolumn{2}{c|}{0.5 m iterations} & \multicolumn{2}{c}{1 m iterations} \\
         & Baseline & SWE & Baseline & SWE \\ \hline
Pretrain MLM (acc.\%) & 68.74 & \textbf{69.21} & 69.86 & \textbf{70.16} \\ \hline
SQuAD average         & 82.06 & \textbf{82.91} & 85.01 & \textbf{85.50} \\
GLUE average          & 76.17 & \textbf{76.72} & 76.40 & \textbf{77.55} \\ \hline
SQuAD v1.1 (F-1\%)    & 88.29 & \textbf{89.23} & 90.32 & \textbf{90.40} \\
SQuAD v2.0 (F-1\%)    & 75.82 & \textbf{76.59} & 79.69 & \textbf{80.59} \\
GLUE/AX (corr\%)      & 35.3  & \textbf{36.2}  & 34.7  & \textbf{36.4}  \\
GLUE/MNLI-m (acc.\%)  & 83.9  & \textbf{84.3}  & 84.4  & \textbf{85.2}  \\
GLUE/MNLI-mm (acc.\%) & 83.0  & \textbf{83.5}  & 83.5  & \textbf{84.9}  \\
GLUE/QNLI (acc.\%)    & 90.9  & \textbf{91.6}  & 91.5  & \textbf{92.9}  \\
GLUE/QQP (F-1\%)      & 70.9  & \textbf{71.2}  & 71.4  & \textbf{71.5}  \\
GLUE/SST-2 (acc.\%)   & 93.0  & \textbf{93.5}  & 92.9  & \textbf{94.4}  \\
\end{tabular}
\end{table*}

We use the AdamW optimizer \citep{loshchilov2017decoupled} with the weight decay rate being $0.01$, $\beta_1=0.9$, and $\beta_2=0.999$. For English Wikipedia and BookCorpus, we use Pre-LN \citep{he2016identity,wang2019learning} instead of the original BERT's Post-LN. Note that the correct implementation of Pre-LN contains a final layer-norm right before the final classification/masked language modeling layer. Unlike the claim made by \cite{xiong2020layer}, we notice that using Pre-LN with learning rate warmup leads to better baseline performance, as opposed to not using learning rate warmup. In our implementation, the learning rate starts from $0.0$, linearly increases to the peak value of $3\times 10^{-4}$ (the same learning rate used by \cite{xiong2020layer}) at the $10k$-th iteration, and then linearly decays to $0.0$ at the end of the training. For the XLNet dataset, we apply the same Pre-LN setup except the peak learning chosen to be $2 \times 10^{-4}$. The peak learning rate of $3\times 10^{-4}$ makes training unstable here and yields worse performance than $2 \times 10^{-4}$.

We adopt the training procedure in the TensorFlow official implementation in which a BERT model is trained for 1 million iteration steps (both on English Wikipedia plus BookCorpus and on the XLNet dataset) with a constant batch size of 512. We also report the results of training for a half-million steps, to show how our method performs with less computational resources. \textit{When applying the proposed SWE, we train the model with weights shared for \textbf{10\%} of the total number of steps, then train with weights untied for the rest of the steps.}

\begin{table*}[h]
\centering
\caption{We group several consecutive layers as a weight sharing unit instead of sharing weights only across original layers. $A\times B$ means grouping $A$ layers as a unit which is being shared with $B$ times. Models are trained on English Wikipedia and BookCorpus. Note that SWE 4x6 and 2x12 trained for a half million iterations achieve similar fine-tuning results compared to the non-SWE baseline trained for one million iterations.}
\vspace{3mm}
\label{tab:logical_physical_layers}
\begin{tabular}{l|c|ccccc|c}
& Baseline & \multicolumn{5}{|c|}{SWE} & Baseline \\
& 0.5 m iter. & \multicolumn{5}{|c|}{0.5 m iterations} & 1 m iter. \\
& 24x1 & 12x2 & 6x4 & 4x6 & 2x12 & 1x24 & 24x1 \\ \hline
Pretrain MLM (acc.\%) & 73.66 & 73.82 & 73.99 & 73.90 & 74.16 & 73.92 & \textbf{74.98} \\ \hline
SQuAD average         & 87.27 & 87.96 & 88.59 & \textbf{89.17} & 88.75 & 88.34 & 88.82 \\
GLUE average          & 78.17 & 78.87 & 79.12 & 78.82 & \textbf{79.47} & 78.98 & 79.30 \\ \hline
SQuAD v1.1 (F-1\%)    & 91.54 & 92.18 & \textbf{92.62} & 92.52 & 92.44 & 92.54 & 92.58 \\
SQuAD v2.0 (F-1\%)    & 82.99 & 83.74 & 84.56 & \textbf{85.82} & 85.06 & 84.14 & 85.06 \\
GLUE/AX (corr\%)      & 39.0  & 40.7  & 40.3  & 40.2  & \textbf{43.0}  & 40.1  & 42.3  \\
GLUE/MNLI-m (acc.\%)  & 85.9  & 86.9  & \textbf{87.9}  & 87.0  & 87.1  & 87.2  & 86.9  \\
GLUE/MNLI-mm (acc.\%) & 85.4  & 85.8  & \textbf{87.0}  & 86.4  & 86.4  & 86.7  & 86.1  \\
GLUE/QNLI (acc.\%)    & 92.3  & 93.1  & 93.2  & 92.9  & \textbf{93.8}  & 93.4  & 93.5  \\
GLUE/QQP (F-1\%)      & 72.1  & 72.0  & 72.0  & 71.6  & 71.8  & 71.7  & \textbf{72.2}  \\
GLUE/SST-2 (acc.\%)   & 94.3  & 94.7  & 94.3  & \textbf{94.8}  & 94.7  & \textbf{94.8} & \textbf{94.8}  \\
\end{tabular}
\end{table*}

After pre-training, we fine-tune the models for the Stanford Question Answering Dataset (SQuAD v1.1 and SQuAD v2.0) \citep{rajpurkar2016SQuAD} and the GLUE benchmark \citep{wang2018glue}. For all fine-tuning tasks, we follow the setting as in the literature: the model is fine-tuned for 3 epochs; the learning rate warms up linearly from 0.0 to peak in the first 10\% of the training iterations, then linearly decay to $0.0$. We select the best peak learning rate from $\{1\times10^{-5}, 1.5\times10^{-5}, 2\times10^{-5}, 3\times10^{-5}, 4\times10^{-5}, 5\times10^{-5}, 7.5\times10^{-5}, 10\times10^{-5}, 12\times10^{-5}\}$ based on performance on the validation set. For the SQuAD datasets, we fine-tune each model 5 times and report the average. For the GLUE benchmark, for each training method, we simply train one BERT model and submit the model's predictions over the test sets to the GLUE benchmark website to obtain test results. We observed that when using Pre-LN, the GLUE fine-tuning process is stable and only rarely leads to divergence.

\subsection{Experiment Results}
Both pre-training and fine-tuning results of our method vs. the baseline method are shown in Table \ref{tab:swe_bert}. We see that our method consistently outperforms baseline methods, especially on fine-tuning tasks. We show training loss curves in Figure \ref{fig:loss_curve}. The training loss is high when the weights are shared. The loss drops significantly after untying. Eventually, SWE yields a slightly lower training loss than the non-SWE baseline.

\begin{figure}[h]
\centering
\includegraphics[clip, trim={25 15 5 25}, width=0.80\columnwidth]{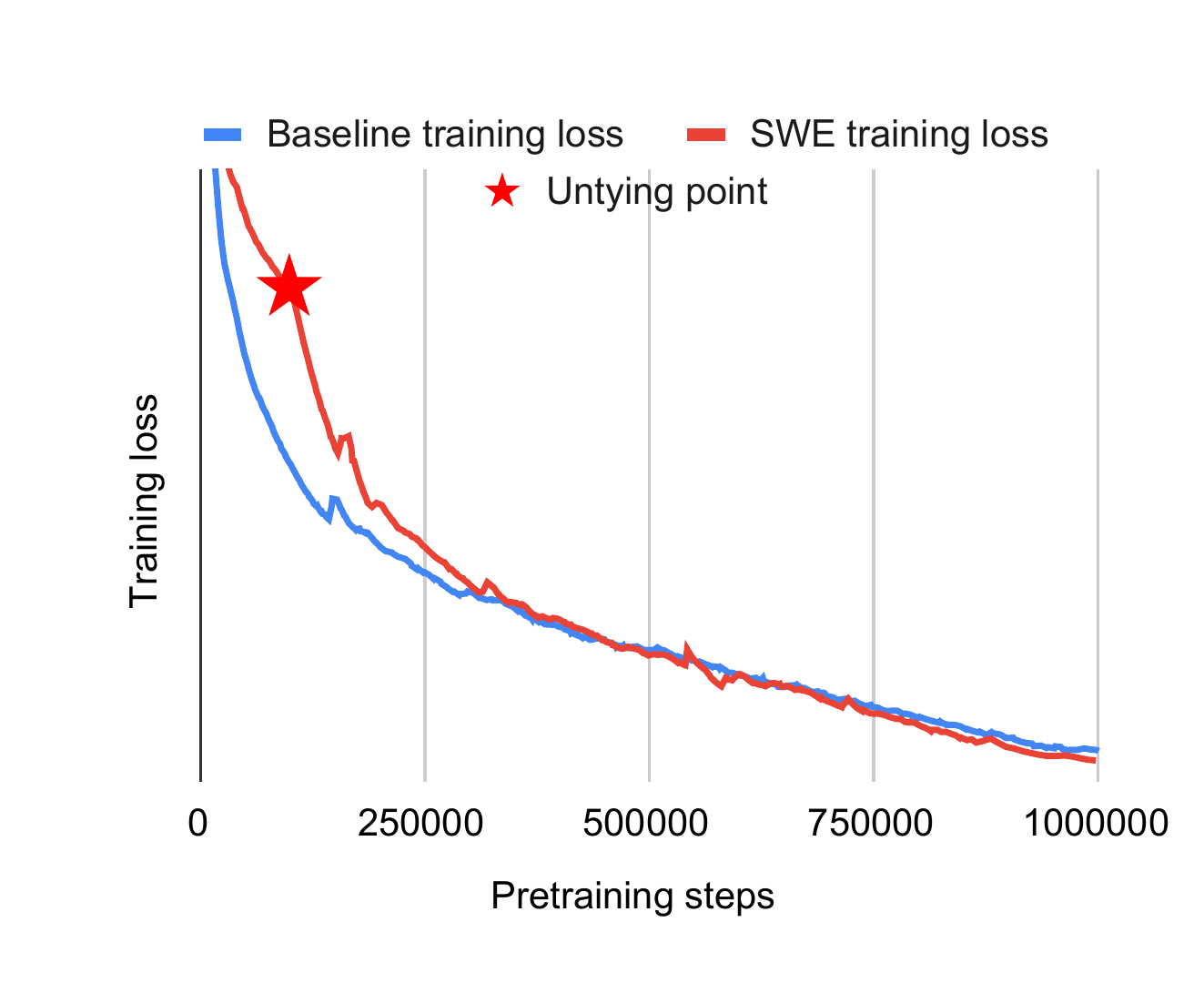}
\caption{BERT pre-training loss curves. In the first 100k steps of SWE training, weights are shared and the pre-training loss is high. After untying, the loss improves significantly. Eventually, SWE yields a slightly lower pre-training loss, and significantly better fine-tuneing results. This matches the observance in our illustrative example for linear regression (Figure \ref{fig:regression}).}
\label{fig:loss_curve}
\vspace{-5mm}
\end{figure}

\subsection{Ablation studies}
\label{sec:ablation_studies}
In this section, we study how our method performs across different experimental settings.

\subsubsection{How SWE Works with Different Model Sizes}
\label{sec:different_widths}
According to experiments shown in the ALBERT paper \cite{lan2020albert}, ALBERT-base with an embedding size of 128 performs preferably compared to ALBERT-base with a smaller or larger embedding size. In this section, we conduct experiments to see if the performance of our SWE training method is also related to the width of the model.

In particular, in addition to the conventional embedding size of 1024 in BERT, we experiment with alternative embedding sizes of 768 and 1536. We also scale the number of hidden units in the feed-forward block, and the size of each self-attention head accordingly. For each model size, we select the best peak learning rate from $\{1 \times 10^{-4}, 2 \times 10^{-4}, 3 \times 10^{-4}, 4 \times 10^{-4}\}$. The resulting peak learning rates adopted for embedding sizes of 768 and 1536 are $3 \times 10^{-4}$ and $2 \times 10^{-4}$, respectively. Experiment results are shown in Table \ref{tab:swe_embed_size}. The proposed SWE approach improves the performance of BERT with different embedding sizes.

Additionally, we experiment with the BERT-base architecture which contains 12 transformer layers, with an embedding size of 768. We keep other experimental settings such as the learning rate schedule unchanged. Results are shown in Table \ref{tab:swe_bert_base}. The proposed SWE method also improves the performance of the BERT-base consistently.

\subsubsection{When to Stop Weight Sharing}
\label{sec:when_to_unshare}
In this section, we study the effects of using different untying points (Algorithm \ref{alg:weight_sharing_fix}). If weights are shared throughout the entire pre-training process, the final performance will be much worse than without any form of weight sharing \citep{lan2020albert}. On the other hand, without weight sharing at all yields worse generalization ability.

Results of using different untying point $\tau$ values are summarized in Figure. \ref{fig:break_points}. BERT models are trained for 500k iterations on English Wikipedia and BookCorpus. From the results, we see that untying at around 10\% of the total number of training steps lead to the best performance. This is the reason we use $\tau=50k$ / $\tau=100k$ when training BERT for a half-million / one million steps, respectively.

\begin{figure}[h]
\centering
\includegraphics[clip, trim={0 20 0 20}, width=0.80\columnwidth]{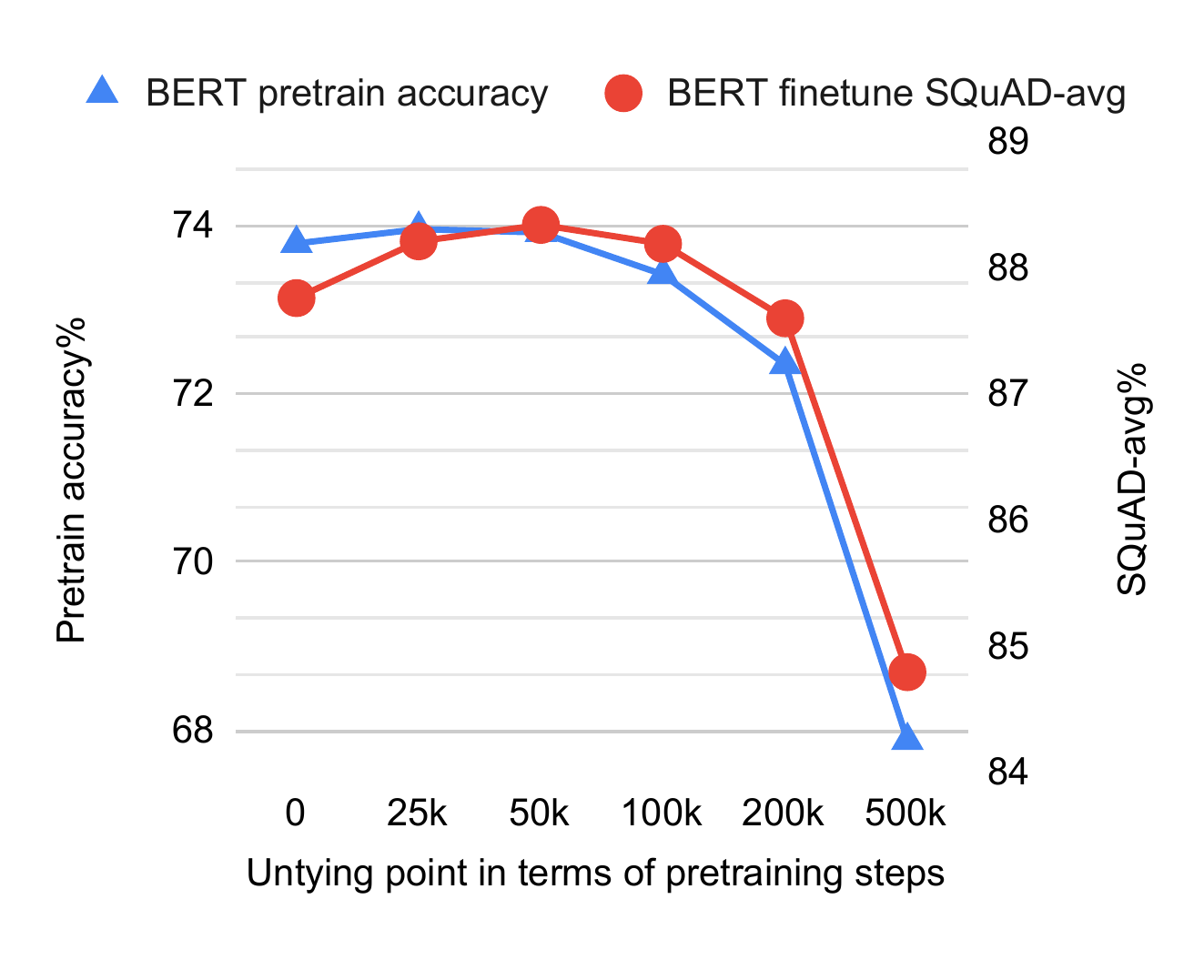}
\caption{Selecting the untying point $\tau$ from range $\tau=0$ (non-weight sharing baseline) to $\tau=T$ (ALBERT-like). The optimal untying point is around 10\% of the total number of training steps.}
\label{fig:break_points}
\end{figure}

\subsubsection{How to choose weight sharing units}
Note that it is not necessary to be restricted to share weights only across the original layers. We can group several consecutive layers as a weight-sharing unit. We denote $A \times B$ as grouping $A$ layers as a weight sharing unit which is being shared with $B$ times. Since BERT has 24 layers, the baseline method without weight sharing can be viewed as ``24x1'', and our method shown in Table \ref{tab:swe_bert} can be viewed as ``1x24''. We present results from more different choices of  weight sharing units in Table \ref{tab:logical_physical_layers}. We can see that,  in order to achieve good results, the size of the chosen weight-sharing unit should not be larger than 6 layers. This means that the weights of a layer must be shared for at least 4 times.

\section{Conclusion}
\label{sec:conclusion}

We proposed a simple weight sharing method to speed up the learning of the BERT model and showed promising empirical results. In particular, our method demonstrated consistent improvement for BERT training and its performance on various downstream tasks. Our method is motivated by the successes of weight sharing models in the literature, and is validated with theoretic analysis. For future work, we will extend our empirical studies to other deep learning models and tasks, and analyze under which conditions our method will be helpful. 

\bibliography{references}
\bibliographystyle{icml2021}

\newpage
\appendix
\section{Proofs of Section \ref{sec:theory}}

\subsection{Proof of Proposition \ref{prop:regression}}
\begin{proof}
    Let $\mathbbm{1}$ be the $L$-dimensional vector with all elements being 1. Let $\overline w^* = \Pcal_{\Wcal_0}(w^*), \widetilde w^* = \Pcal_{\Wcal_0^\perp}(w^*)$. For simplicity, denote $\overline w^* = w_0^* \cdot \mathbbm{1}$, and $\widehat w = w_0 \cdot \mathbbm{1}$, where $w_0^*, w_0$ are scalars. For the training set $\cbr{x_i, x_i^\top w^*}$, let the gradient of $x_0$ be 0, we have
    \begin{align*}
         & \frac{\partial}{\partial w_0}\sum_{i=1}^n\sbr{x_i^\top \mathbbm{1}\cdot w_0 - x_i^\top\rbr{w_0^*\cdot \mathbbm{1} + \widetilde w^*}}^2 = 0.
    \end{align*}
    It implies the solution of $w_0$ to be
    \begin{align*}
        w_0 = & \frac{ w_0^* \sum_{i=1}^n\rbr{x_i^\top \mathbbm{1}}^2 + \sum_{i=1}^n\rbr{x_i^\top \mathbbm{1}\cdot x_i^\top \widetilde \theta^*}}{\sum_{i=1}^n\rbr{x_i^\top \mathbbm{1}}^2} \\ 
        = & w_0^* + \frac{\sum_{i=1}^n\rbr{x_i^\top \mathbbm{1}\cdot x_i^\top \widetilde \theta^*}}{\sum_{i=1}^n\rbr{x_i^\top \mathbbm{1}}^2}
    \end{align*}
    Note that by definition, $\mathbbm{1}$ is orthogonal with $\widetilde \theta^*$. For $x_i$ generated from gaussian distribution with identity covariance, we know $x_i^\top \mathbbm{1}$ and $x_i^\top \widetilde \theta^*$ are independent gaussian random variable with variance $L$. Therefore we have
    \begin{align*}
        \abs{w_0 - w_0^*} = O\rbr{\frac{1}{\sqrt{n}}},
    \end{align*}
    which implies for the weight sharing solution $\widehat w$, we have
    \begin{align*}
        \norm{\widehat w - \overline w_0^*} = O\rbr{\sqrt{L / n}}.
    \end{align*}
\end{proof}

\subsection{Proof of Theorem \ref{thm:weight_sharing}}

\begin{proof}
    With weight sharing, we have
    \begin{align*}
        \dot W_l(t) = -\sum_{l=1}^L\nabla_l\Rcal(t) / L = -W_0^{L-1}(t)\rbr{W_0^L(t) - \Phi}.
    \end{align*}
    For the loss function $\Rcal(t)$, we have
    \begin{align*}
        \dot\Rcal(t) = & \sum_{l=1}^L\tr\rbr{\nabla_l^\top\Rcal(t)\dot W_l(t)} =  -\sum_{l=1}^L\norm{\nabla_l\Rcal(t)}_F^2 \\
        = & - \sum_{l=1}^L\norm{W_0(t)^{L-1}\rbr{W_0(t)^L-\Phi}}_F^2 \\
        \le & - 2L \lambda^2_\text{min}(W_0(t)^{L-1})\Rcal(t).
    \end{align*}
    By continuous gradient descent with $W_0(0) = I$, it is easy to see that
    \begin{align*}
        \lambda_\text{min}(W_0^{L-1}(t)) \ge \min(1, \lambda_\text{min}(\Phi)).
    \end{align*}
    Therefore we have
    \begin{align*}
        &\dot\Rcal(t) \le -2L\min(1, \lambda^2_\text{min}(\Phi))\Rcal(t) \\
        &\implies \Rcal(t) \le e^{-2L\min(1, \lambda^2_\text{min}(\Phi))t}\Rcal(0).
    \end{align*}
\end{proof}

\subsection{Technical Lemma}
\begin{lemma}\label{lemma:condition_on_eta}
  Initializing $W_0 = I$ and training with weight sharing update (Equation \ref{eqn:weight_sharing_update}), by setting $\eta \le \frac{1}{L \max\rbr{\lambda_\text{max}^2(\Phi), 1}}$, we have
  \begin{align*}
      & \lambda_\text{min}\rbr{W_0(t)} \ge \min(\lambda_\text{min}(\Phi)^{1/L}, 1),\\
      & \lambda_\text{max}\rbr{W_0(t)} \le \max(\lambda_\text{max}(\Phi)^{1/L}, 1).
  \end{align*}
\end{lemma}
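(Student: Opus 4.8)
The plan is to reduce the matrix recursion to decoupled scalar recursions and then establish an invariant interval for each one. Since $W_0(0) = I$ and, under the weight-sharing update, $\frac{1}{L}\sum_{i=1}^L \nabla_i\Rcal(t) = W_0(t)^{L-1}\rbr{W_0(t)^L - \Phi}$, a straightforward induction shows that every iterate $W_0(t)$ is a polynomial in $\Phi$ (all layers start equal and receive the same update, and $I$ together with $p(\Phi) - \eta\, p(\Phi)^{L-1}(p(\Phi)^L - \Phi)$ are polynomials in $\Phi$); hence $W_0(t)$ is symmetric and commutes with $\Phi$. Working in an orthonormal eigenbasis of the positive definite $\Phi$ with eigenvalues $\lambda_1,\dots,\lambda_d > 0$, the update decouples coordinate-wise: writing $d_k(t)$ for the $k$-th eigenvalue of $W_0(t)$, we have $d_k(0) = 1$ and $d_k(t+1) = d_k(t) - \eta\, d_k(t)^{L-1}\rbr{d_k(t)^L - \lambda_k}$. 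Because $\lambda_\text{min}(W_0(t)) = \min_k d_k(t)$, $\lambda_\text{max}(W_0(t)) = \max_k d_k(t)$, and $\min(\lambda_k^{1/L},1) \ge \min(\lambda_\text{min}(\Phi)^{1/L},1)$ while $\max(\lambda_k^{1/L},1) \le \max(\lambda_\text{max}(\Phi)^{1/L},1)$, it suffices to prove that each scalar orbit stays in the interval $I_k := [\min(\lambda_k^{1/L},1),\, \max(\lambda_k^{1/L},1)]$.

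Fix one coordinate and write $\lambda$ for $\lambda_k$, $\mu := \lambda^{1/L}$, and $\phi(x) := x - \eta x^{L-1}(x^L - \lambda)$, so the recursion is $d(t+1) = \phi(d(t))$, with fixed point $\mu$ and $d(0) = 1 \in I_k$. The key is the factorization $x^L - \lambda = (x-\mu)S$ with $S := \sum_{j=0}^{L-1} x^{L-1-j}\lambda^{j/L} > 0$, which yields the identity $\phi(x) - \mu = (x-\mu)\rbr{1 - \eta x^{L-1}S}$. I would then prove invariance of $I_k$ by induction on $t$, splitting into two cases. If $\lambda \ge 1$, then $I_k = [1,\mu]$ with $\mu \ge 1$, and for $x \in [1,\mu]$ one has $x^L \le \lambda$, so $\phi(x) \ge x \ge 1$ immediately; the upper bound $\phi(x) \le \mu$ follows from the identity as soon as $1 - \eta x^{L-1}S \ge 0$. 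If $\lambda < 1$, then $I_k = [\mu,1]$ with $\mu < 1$, and for $x \in [\mu,1]$ one has $x^L \ge \lambda$, so $\phi(x) \le x \le 1$ immediately; the lower bound $\phi(x) \ge \mu$ again follows from the identity once $1 - \eta x^{L-1}S \ge 0$.

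It therefore remains to check $\eta x^{L-1}S \le 1$ on the relevant interval, which is where the hypothesis $\eta \le \frac{1}{L\max(\lambda_\text{max}^2(\Phi),1)}$ enters. When $\lambda \ge 1$ and $x \le \mu = \lambda^{1/L}$, each summand of $S$ is at most $\lambda^{(L-1)/L}$, so $S \le L\lambda^{(L-1)/L}$ and $x^{L-1} \le \lambda^{(L-1)/L}$, giving $x^{L-1}S \le L\lambda^{2(L-1)/L} \le L\lambda^2 \le L\lambda_\text{max}^2(\Phi)$. When $\lambda < 1$ and $x \le 1$, every factor $x^{L-1-j}$ and $\lambda^{j/L}$ is at most $1$, so $x^{L-1}S \le L$. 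In both cases $x^{L-1}S \le L\max(\lambda_\text{max}^2(\Phi),1)$, hence $\eta x^{L-1}S \le 1$, which closes the induction. Taking the minimum and maximum over $k$ then gives the two claimed bounds.

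The main obstacle is precisely this last estimate: controlling the step size so the discrete update never overshoots the fixed point $\mu$, i.e.\ keeping $1 - \eta x^{L-1}S \ge 0$ throughout the orbit. Everything else is bookkeeping once the factorization $\phi(x) - \mu = (x-\mu)(1-\eta x^{L-1}S)$ is in hand; the factor $L\max(\lambda_\text{max}^2(\Phi),1)$ in the bound on $\eta$ is exactly calibrated to the worst-case size of $x^{L-1}S$ over the invariant interval, so I do not expect room to weaken it substantially.
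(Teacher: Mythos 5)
Your proposal is correct and follows essentially the same route as the paper's proof: diagonalize in the eigenbasis of $\Phi$, reduce to decoupled scalar recursions $\widehat\lambda(t+1)=\widehat\lambda(t)-\eta\widehat\lambda(t)^{L-1}(\widehat\lambda(t)^L-\lambda)$, factor out $(x-\mu)$, and prove the invariant interval by induction with the same two-case step-size bound yielding $\eta \le \frac{1}{L\max(\lambda_\text{max}^2(\Phi),1)}$. The only difference is that you explicitly justify (via the polynomial-in-$\Phi$ induction) that $W_0(t)$ shares eigenvectors with $\Phi$, a fact the paper simply asserts.
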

\begin{proof}
    With weigth sharing, we know that $W_0(t)$ has the same eigenvectors as $\Phi$. Take any eigenvector, denote $\widehat\lambda(t)$ and $\lambda^L$ to be the corresponding eigenvalue of $W_0(t)$ and $\Phi$. Thus
    \begin{align*}
        \widehat\lambda(t+1) = \widehat\lambda(t) - \eta\widehat\lambda(t)^{L-1}\rbr{\widehat\lambda(t)^L - \lambda^L}.
    \end{align*}
    For $\lambda > 1$, we would like to show $\widehat\lambda(t) \in [1, \lambda], \forall t\ge 0$ by setting $\eta \le \frac{1}{L\lambda^{2L}}$. Since $\widehat\lambda(0) = 1$, we know this claim holds trivially at $t=0$. Then suppose we have the claim holds for $t = t_0$, then $\widehat\lambda(t_0+1)$ equals to
    \begin{align*}
         \widehat\lambda(t_0) - \eta\widehat\lambda(t_0)^{L-1}\rbr{\widehat\lambda(t_0)-\lambda}\sum_{i=0}^{L-1}\widehat\lambda(t_0)^i\lambda^{L-1-i}.
    \end{align*}
    To make $\widehat\lambda(t_0 + 1) \in [1, \lambda]$, we set $\eta \le \frac{1}{L \lambda ^{2L}}$ and can upper bound $\widehat\lambda(t_0+1)$ by
    \begin{align*}
    \widehat\lambda(t_0) - \frac{\widehat\lambda(t_0)^{L-1}\sum_{i=0}^{L-1}\widehat\lambda(t_0)^i\lambda^{L-1-i}}{\lambda^{L-1}\sum_{i=1}^{L-1}\lambda^{L-1}}\rbr{\widehat\lambda(t_0)-\lambda} \le \lambda.
    \end{align*}
    And $\eta \ge 0$ guarantees that $\widehat\lambda(t_0+1) \ge \widehat\lambda(t_0) \ge 1$. By induction, when $\lambda > 1$, we have$\widehat\lambda(t) \in [1, \lambda], \forall t \ge 0$.
    
    Similarly, for $\lambda < 1$, we would like to show $\widehat\lambda(t) \in [\lambda, 1]$ by setting $\eta \le \frac{1}{L}$. Note again the claim holds trivially when $t = 0$. Suppose $\widehat\lambda(t_0) \in [\lambda, 1]$, we can lower bound $\widehat\lambda(t_0+1)$ by
    \begin{align*}
         \widehat\lambda(t_0) - \frac{\widehat\lambda(t_0)^{L-1}\sum_{i=0}^{L-1}\widehat\lambda(t_0)^i\lambda^{L-1-i}}{1 \sum_{i=0}^{L-1}1} \rbr{\widehat\lambda(t_0)-\lambda} \ge \lambda.
    \end{align*}
    And $\eta \ge 0$ guarantees that $\widehat\lambda(t_0+1) \le \widehat\lambda(t_0) \ge 1$. By induction, when $\lambda < 1$, we have$\widehat\lambda(t) \in [\lambda, 1], \forall t \ge 0$.
    
    Note that the two claims hold for all $\lambda$, it then directly implies that by setting $\eta \le \frac{1}{L \max\rbr{\lambda_\text{max}^2(\Phi), 1}}$, we have
    \begin{align*}
        & \lambda_\text{min}\rbr{W_0(t)} \ge \min(\lambda_\text{min}(\Phi)^{1/L}, 1),\\
        & \lambda_\text{max}\rbr{W_0(t)} \le \max(\lambda_\text{max}(\Phi)^{1/L}, 1),
    \end{align*}
    which completes the proof.
\end{proof}

\subsection{Proof of Theorem \ref{thm:global_convergence}}
\begin{proof}
  Denote $W_{i:j} \coloneqq W_{i}W_{i-1}\cdots W_{j+1}W_{j}$. Focusing on the first stage of weight-sharing training, we have $W_{i} = \frac{1}{\sqrt{L}}\cdot \frac{1}{2}\rbr{\Gamma + \Gamma^{\top}}$. The gradient of $W_i$ for $i > 0$ is
  \begin{align*}
    \nabla_i \Rcal = \frac{\partial \Rcal(\widehat W_{i})}{\partial \widehat W_{i}} & = W_{L:i+1}^{\top}\rbr{W_{L:1} - \Phi}W_{i-1:1}^{\top} \\
    & = W_0^{L-i}\rbr{W_0^{L} - \Phi}W_0^{i-1}
  .\end{align*}
  which implies that the gradient of $W_0$ is
  \begin{align*}
    \nabla_{0} \Rcal & = \frac{1}{2n}\sum_{i=1}^{L}\nabla_i \Rcal(t) + \nabla_i \Rcal(t)^{\top} \\
    & = \frac{1}{L}\sum_{i=1}^{L}W_0^{L-i}\rbr{W_0^{L} - \frac{1}{2}\rbr{\Phi + \Phi^{\top}}}W_0^{i-1}
  \end{align*}
  Therefore, the first stage of weight-sharing is equivalent to training towards $\frac{1}{2}\rbr{\Phi + \Phi^{\top}}$. For simplicity, define $\Phi_{s} = \frac{1}{2}\rbr{\Phi + \Phi^{\top}}$, where $s$ stands for "symmetry". Further, $\Phi_{s}$ is positive definite as $\sigma_{min}(\Phi_{s}) \ge \sigma_{min}(\Phi) \ge \sigma_{min}(\Phi_{0}) - \frac{1}{3}\sigma_{min}(\Phi_{0}) > 0$. We have proved in Theorem \ref{thm:weight_sharing} and \ref{thm:discrete_weight_sharing} that learning a positive definite matrix with identity initialization
  by weight-sharing is easy and fast.

  For the $\widehat w$ at the end of the first stage in weight-sharing, we have $\norm{\widehat W_{i+1}^{\top}\widehat W_{i+1} - \widehat W_{i}\widehat W_{i}^{\top}}_{F} = 0, \forall i$, and $\sqrt{2 \Rcal(\widehat w)} \approx \norm{\Phi - \Phi_{s}}_{F} \le \norm{\Phi - \Phi_{0}}_{F} \le \frac{1}{3}\sigma_{min}(\Phi_{0})$. Combining with $\sigma_{min}(\Phi) \ge \frac{2}{3}\sigma_{min}\rbr{\Phi_{0}}$, we have $\sqrt{2 \Rcal(\widehat w)} \le
  \sigma_{min}(\Phi) - \frac{1}{3}\sigma_{min}(\Phi_{0})$. Thus $\widehat W$ satisfies all requirements for initialization in Fact \ref{fact:local_convergence}. And the convergence to $\Phi$ in the second stage of weight-sharing follows immediately.
\end{proof}

\subsection{Discrete-time gradient descent}\label{subsec:discrete_time}

One can extend the previous result to the discrete-time gradient descent with a positive constant step size $\eta$. It can be shown that with zero-asymmetric initialization, training with the gradient descent will achieve $\Rcal(t) \le \epsilon$ within $O(L^3\log(1/\epsilon))$ steps; initializing and training with weights sharing, the deep linear network will learn a positive definite matrix $\Phi$ to $\Rcal(t) \le \epsilon$ within $O(L\log(1/\epsilon))$ steps, which reduces the required iterations by a factor of $L^2$. Formally, see the convergence result of ZAS in Fact \ref{thm:discrete_no_sharing} and the convergence result of weight sharing in Theorem \ref{thm:discrete_weight_sharing}.

\begin{fact}[Continuous-time gradient descent without weight sharing \citep{wu2019global}]\label{thm:discrete_no_sharing}
  For deep linear network $f(\xb; W_1, ..., W_L) = W_LW_{L-1}...W_1\xb$ with zero-asymmetric initialization and discrete-time gradient descent, if the learning rate satisfies $\eta \le \min\cbr{\rbr{4L^3\xi^6}^{-1}, \rbr{144 L^2\xi^4}^{-1}}$, where $\xi = \max\cbr{2\norm{\Phi}_F, 3L^{-1/2}, 1}$, then we have linear convergence $\Rcal(t) \le \rbr{1 - \frac{\eta}{2}}^t\Rcal(0)$.
\end{fact}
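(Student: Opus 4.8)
Since this is the discrete-time analogue of Fact~\ref{thm:no_weight_sharing}, I would follow the template of \citet{wu2019global}: combine a \emph{restricted descent (smoothness) lemma} with a \emph{gradient-domination (Polyak--{\L}ojasiewicz) inequality}, both valid on an invariant region the trajectory provably never leaves, and run a single induction on $t$. The induction hypothesis I would carry is that, for every $s\le t$: (i) $\Rcal(s)\le(1-\tfrac{\eta}{2})^s\Rcal(0)$; (ii) $\norm{W_{L:1}(s)}\le\xi$ and $\norm{W_l(s)}\le\xi$ for all $l$, so that (via near-balancedness) every composite product $W_{j:i}(s)$ also has operator norm at most $\xi$ --- this is the crucial ``no $\xi^L$ blow-up'' invariant; and (iii) the balancedness deficits $W_{l+1}(s)^{\top}W_{l+1}(s)-W_l(s)W_l(s)^{\top}$ stay within $O(\eta\xi^2)$ of their values at initialization. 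The zero-asymmetric initialization is exactly what makes (ii)--(iii) workable: it forces $W_{L:1}(0)=0$ (so $\Rcal(0)=\tfrac12\fnorm{\Phi}^2$ and $\fnorm{W_{L:1}(s)-\Phi}\le\fnorm{\Phi}(1-\tfrac{\eta}{2})^{s/2}$), and it makes all the interlayer deficits vanish except one pair, whose deficit equals $-I$; this single $-I$ pins the common singular values of $W_1(s),\dots,W_{L-1}(s)$ slightly above $1$, and hence keeps $W_{L-1:1}(s)$ well-conditioned with $\sigma_{\min}\ge c$ for a constant $c\ge 1/\sqrt{2}$ (the small loss below $1$ being the price of the $O(\eta\xi^2)$ deficit drift) throughout training.

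Given the region, the two analytic ingredients are routine. For the descent step I would bound the smoothness constant of $\Rcal$ on $\{\,\norm{W_l}\le\xi\ \forall l\,\}$: every term of $\nabla^2\Rcal$ factors through a bounded number of the composite products $W_{j:i}$, $W_{L:1}$ and the matrix $\Phi$, each of operator norm $O(\xi)$ by (ii), so $\beta=\mathrm{poly}(L)\cdot\xi^{O(1)}$; choosing $\eta\le(144L^2\xi^4)^{-1}$ makes $\eta\beta$ small enough that $\Rcal(t+1)\le\Rcal(t)-\tfrac{\eta}{2}\fnorm{\nabla\Rcal(t)}^2$. For gradient domination I would keep only the last-layer block, $\fnorm{\nabla_L\Rcal(t)}^2=\fnorm{(W_{L:1}(t)-\Phi)\,W_{L-1:1}(t)^{\top}}^2\ge\sigma_{\min}(W_{L-1:1}(t))^2\,\fnorm{W_{L:1}(t)-\Phi}^2\ge c^2\cdot 2\Rcal(t)$, using the conditioning bound from the $-I$ deficit. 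Plugging this into the descent inequality gives $\Rcal(t+1)\le(1-c^2\eta)\Rcal(t)\le(1-\tfrac{\eta}{2})\Rcal(t)$ since $c^2\ge\tfrac12$, which advances part (i) of the induction.

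Closing parts (ii)--(iii) is the step that pins down the admissible $\eta$. I would first note the discrete conservation law: the $O(\eta)$ terms in the one-step change of $W_{l+1}^{\top}W_{l+1}-W_lW_l^{\top}$ cancel exactly (the same algebra behind the balancedness invariant used in Fact~\ref{fact:local_convergence}), leaving a remainder of size $O(\eta^2\fnorm{\nabla\Rcal(t)}^2)$; summing over $t$ and telescoping against $\sum_{s}\eta\fnorm{\nabla\Rcal(s)}^2\le 2\Rcal(0)$ bounds the total deficit drift by $O(\eta\xi^2)$, which must stay inside the $O(1/L)$ budget that the conditioning estimate needs --- this requirement, together with $\eta\le1/\beta$ from the descent lemma, is where the cube $L^3$ and the sixth power $\xi^6$ emerge in the step-size condition (the exact exponents reflecting a deliberately loose accounting). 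For the norm bound (ii) I would not sum gradient increments (too lossy); instead use $\norm{W_{L:1}(t)}\le\fnorm{W_{L:1}(t)-\Phi}+\fnorm{\Phi}\le\xi$ together with near-balancedness, which confines the common singular value of $W_1(t),\dots,W_{L-1}(t)$ to $[c,\ \xi^{1/(L-1)}]$ and hence every $\norm{W_l(t)}\le\xi$, and likewise caps every composite product at $\xi$.

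The main obstacle is making this mutual induction airtight over the \emph{infinite} horizon: step (i) needs the region (ii)--(iii) to furnish its PL and smoothness constants, while preserving (ii)--(iii) needs (i) to make the per-step drifts geometrically summable, so all constants must be chosen so that the accumulated $O(\eta\xi^2)$ drifts never exhaust the fixed, $L$-dependent slack in the definition of the region. A secondary point that genuinely needs the zero-asymmetric structure (and fails for a generic small initialization) is the ``no $\xi^L$'' invariant in (ii): one must verify that on the actual trajectory every matrix product showing up in the gradients, in the Hessian bound, and in the conditioning estimates has norm polynomial rather than exponential in $L$, which follows because at most one layer ever leaves a bounded-conditioning regime.
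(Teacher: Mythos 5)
This statement is not proved in the paper at all: it is stated as a \emph{Fact} imported verbatim from \citet{wu2019global} (the ``ZAS'' paper), so there is no in-paper proof to compare your attempt against --- the authors only use the conclusion as a benchmark for their weight-sharing rates, and the only results they actually prove in this neighborhood are Theorems~\ref{thm:weight_sharing} and~\ref{thm:discrete_weight_sharing}. Judged against the cited source rather than the paper, your outline is a reasonable reconstruction of the standard argument: zero-asymmetric initialization gives $W_{L:1}(0)=0$ (hence $\Rcal(0)=\tfrac12\fnorm{\Phi}^2$) and a single nontrivial balancedness deficit that keeps $W_{L-1:1}(t)$ well conditioned, the last-layer gradient block then yields a Polyak--{\L}ojasiewicz inequality $\fnorm{\nabla\Rcal}^2\gtrsim\Rcal$, a local smoothness bound on the invariant region gives the one-step descent, and an induction couples the loss decay with the drift of the deficits. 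That is indeed the architecture of the proof in \citet{wu2019global}.

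That said, as a proof your proposal is only a skeleton, and the parts you wave at are exactly where the stated constants come from: you never derive the smoothness constant on the region, never quantify the per-step deficit drift beyond ``$O(\eta^2\fnorm{\nabla\Rcal}^2)$,'' and never show that the specific threshold $\eta\le\min\cbr{(4L^3\xi^6)^{-1},(144L^2\xi^4)^{-1}}$ with $\xi=\max\cbr{2\fnorm{\Phi},3L^{-1/2},1}$ suffices to keep $\sigma_{\min}(W_{L-1:1}(t))^2\ge\tfrac12$ uniformly in $t$, which is what the rate $1-\tfrac{\eta}{2}$ requires. The ``no $\xi^L$ blow-up'' invariant you flag is the genuinely delicate point: products of $L-1$ near-balanced layers whose singular values sit at $1+O(\delta)$ are only polynomially bounded if the accumulated drift $\delta$ is held to $O(1/L)$ over the \emph{entire} trajectory, and closing that loop consistently with the geometric summability coming from part (i) of your induction is the bulk of the work in the cited paper; asserting that ``the exponents reflect a deliberately loose accounting'' does not discharge it. So: correct strategy, consistent with the source the paper cites, but not a verifiable proof of the quantitative statement as written.
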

Since the learning rate is $\eta = O(L^{-3})$, Fact \ref{thm:discrete_no_sharing} indicates that the gradient descent can achieve $\Rcal(t) \le \epsilon$ within $O(L^3\log(1/\epsilon))$ steps.

\begin{theorem}[Discrete-time gradient descent with weight sharing]\label{thm:discrete_weight_sharing} For the deep linear network $f(\xb; W_1, ..., W_L) = W_LW_{L-1}...W_1\xb$, initialize all $W_l(0)$ with identity matrix $I$ and update according to Equation \ref{eqn:weight_sharing_update}. With a positive definite target matrix $\Phi$, and setting $\eta \le \frac{\min(\lambda^2_\text{min}(\Phi), 1)}{4\sqrt{d}L^2\max\rbr{\lambda^4_\text{max}(\Phi), 1}}$, we have linear convergence $\Rcal(t) \le \exp\sbr{- (2L-2)\min\rbr{\lambda_\text{min}^2(\Phi), 1}\eta t}\Rcal(0)$.
\end{theorem}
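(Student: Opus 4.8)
The plan is to exploit that, under weight sharing with identity initialization, all layers stay equal to a single matrix $W_0(t)$ which is a polynomial in $\Phi$, so the matrix recursion behaves essentially like a scalar one for the error $E(t):=W_0(t)^L-\Phi$; recall $\Rcal(t)=\tfrac12\fnorm{E(t)}^2$. As is standard when $\Phi$ is called positive definite, I take $\Phi$ symmetric, so $W_0(t)$ is symmetric and commutes with $\Phi$ and $E(t)$.

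First I would record the reduction. Since all $W_l(0)=I$ and the update in Equation~\ref{eqn:weight_sharing_update} is identical across layers, $W_1(t)=\dots=W_L(t)=:W_0(t)$ for all $t$, and, using $\tfrac1L\sum_l\nabla_l\Rcal=W_0(t)^{L-1}(W_0(t)^L-\Phi)$ once the layers coincide, the update becomes $W_0(t+1)=W_0(t)-\eta\,W_0(t)^{L-1}\rbr{W_0(t)^L-\Phi}$. An induction from $W_0(0)=I$ shows $W_0(t)$ is a polynomial in $\Phi$. Then I would check that the hypothesis on $\eta$ implies $\eta\le\tfrac1{L\max(\lambda_\text{max}^2(\Phi),1)}$ and invoke Lemma~\ref{lemma:condition_on_eta}: every eigenvalue of $W_0(t)$ lies between $1$ and the corresponding eigenvalue of $\Phi^{1/L}$, moving monotonically toward it. Consequently every eigenvalue of $W_0(t)^{2L-2}$ lies in $[\min(\lambda_\text{min}^2(\Phi),1),\,\max(\lambda_\text{max}^2(\Phi),1)]$ — the lower bound because an eigenvalue $\lambda<1$ of $\Phi^{1/L}$ satisfies $\lambda^{2L-2}\ge\lambda^{2L}\ge\lambda_\text{min}^2(\Phi)$.

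The core step is a one-step contraction of $\fnorm{E(t)}$. Using commutativity,
\[
  W_0(t+1)^L-W_0(t)^L \;=\; \sum_{k=1}^{L}\binom{L}{k}(-\eta)^k\,W_0(t)^{\,L+(L-2)k}\,E(t)^k ,
\]
so that $E(t+1)=\rbr{I-\eta L\,W_0(t)^{2L-2}}E(t)+M(t)E(t)$, where $M(t)=\sum_{k\ge2}\binom Lk(-\eta)^kW_0(t)^{L+(L-2)k}E(t)^{k-1}$ collects the terms with $k\ge2$. By the confinement above the step size keeps $\eta L\,W_0(t)^{2L-2}$ below $I$, hence $\tnorm{I-\eta L\,W_0(t)^{2L-2}}=1-\eta L\,\lambda_\text{min}\rbr{W_0(t)^{2L-2}}\le 1-\eta L\min(\lambda_\text{min}^2(\Phi),1)$. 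For $M(t)$ I would bound $\binom Lk\le L^k$, use $\tnorm{W_0(t)}\le\max(\lambda_\text{max}(\Phi),1)^{1/L}$ so every matrix power in sight is $\le\max(\lambda_\text{max}(\Phi),1)^{O(1)}$, and bound $\tnorm{E(t)}$ by a Frobenius norm — this Frobenius/operator conversion is where the $\sqrt d$ enters — then sum the resulting geometric-type series to get $\tnorm{M(t)}\le C\sqrt d\,L^2\eta^2\max(\lambda_\text{max}^4(\Phi),1)$ for an absolute constant $C$; with $\eta$ as in the statement this is at most $\eta\min(\lambda_\text{min}^2(\Phi),1)$. Combining the two bounds,
\[
  \fnorm{E(t+1)} \;\le\; \rbr{1-(L-1)\eta\min(\lambda_\text{min}^2(\Phi),1)}\fnorm{E(t)} .
\]
Squaring, applying $1-x\le e^{-x}$, and iterating yields $\Rcal(t)=\tfrac12\fnorm{E(t)}^2\le\exp\sbr{-(2L-2)\min(\lambda_\text{min}^2(\Phi),1)\,\eta\,t}\,\Rcal(0)$.

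The reduction and the appeal to Lemma~\ref{lemma:condition_on_eta} are routine. The hard part will be the estimate on the remainder $M(t)$: one must bound it uniformly in $t$ and show the step-size choice makes it negligible against the linear contraction. This is exactly what forces the conservative $\eta=O\!\rbr{\min(\lambda_\text{min}^2(\Phi),1)/(\sqrt d\,L^2\max(\lambda_\text{max}^4(\Phi),1))}$ and reduces the coefficient in the exponent from the ``ideal'' $2L$ — which the contraction term alone would give, and which matches the continuous-time Theorem~\ref{thm:weight_sharing} and an exact per-eigenvalue calculation — to the stated $2L-2$. If a cleaner argument is acceptable for symmetric $\Phi$, the core step can instead be run eigenvalue by eigenvalue through the exact scalar recursion $e(t+1)=e(t)\rbr{1-\eta\,a^{L-1}\sum_{i=0}^{L-1}b^{i}a^{L-1-i}}$ with $a=x(t)$, $b=x(t+1)$, which bypasses $M(t)$ altogether.
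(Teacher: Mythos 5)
Your proposal is correct and follows essentially the same route as the paper's proof: reduce to the single-matrix recursion $W_0(t+1)=W_0(t)-\eta\,W_0(t)^{L-1}\rbr{W_0(t)^L-\Phi}$, invoke Lemma \ref{lemma:condition_on_eta} to confine the spectrum of $W_0(t)$, isolate the first-order term in $\eta$ (which alone would give the $2L$-rate), and absorb the $k\ge 2$ terms using the $O\rbr{1/(\sqrt d\,L^2)}$ step size, with $\sqrt d$ entering exactly as in the paper through $\fnorm{E(t)}\le\fnorm{E(0)}\le\sqrt d\,\max(\lambda_\text{max}(\Phi),1)$. The only cosmetic difference is that you expand $E(t+1)$ via an exact commutative binomial identity and contract $\fnorm{E(t)}$ directly, whereas the paper bounds $\Rcal(t+1)-\Rcal(t)$ through the decomposition $I_1+I_2+I_3$ with scalar inequalities on $\rbr{\tnorm{W_0}+\eta\fnorm{\nabla_0\Rcal}}^L$; both arguments also rely on the same implicit induction that $\Rcal(t)$ is nonincreasing so the uniform bound on $\fnorm{E(t)}$ is available.
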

Take $\lambda_\text{min}(\Phi)/\lambda_\text{max}(\Phi), d$ as constants and focus on the scaling with $L, \epsilon$, we have $\eta = O(L^{-2})$. Because of the extra $L$ in the exponent, we know that when learning a positive definite matrix $\Phi$, training with weight sharing can achieve $\Rcal(t) \le \epsilon$ within $O\rbr{L\log \rbr{1/\epsilon}}$ steps. The dependency on $L$ reduces from previous $L^3$ to linear, which shows the acceleration of training by weight sharing.

\subsection{Proof of Theorem \ref{thm:discrete_weight_sharing}}
\begin{proof}
    Denote $\phi = \max(\lambda_\text{max}(\Phi), 1)$. Training with weight sharing, we have
    \begin{align*}
        \nabla_l\Rcal(t) = W_0^{L-1}(t)\rbr{W_0^L(t)-\Phi},\quad 
    \end{align*}
    By Lemma \ref{lemma:condition_on_eta}, setting $\eta \le \frac{1}{L \max\rbr{\lambda_\text{max}^2(\Phi), 1}}$, we have
    \begin{align*}
        & \lambda_\text{min}\rbr{W_0(t)} \ge \min(\lambda_\text{min}(\Phi)^{1/L}, 1),\\
        & \lambda_\text{max}\rbr{W_0(t)} \le \max(\lambda_\text{max}(\Phi)^{1/L}, 1).
    \end{align*}
    Denote $ {\phi} = \max(\lambda_\text{max}(\Phi), 1)$, we immediately have
    \begin{align*}
        & \norm{\nabla_l\Rcal(t)}_F \le  {\phi}^{(L-1)/L} \sqrt{2\Rcal(t)},\\
        & \norm{\nabla_l\Rcal(t)}_F \ge\min(\lambda_\text{min}(\Phi), 1)^{(L-1)/L} \sqrt{2\Rcal(t)}.
    \end{align*}
    With one step of gradient update, we have
    \begin{align*}
        & \Rcal(t+1) - \Rcal(t) \\
        = & \frac{1}{2}\sbr{\norm{\rbr{W_0-\eta\nabla_0 \Rcal(t)}^L-\Phi}_F^2 - \norm{W_0 - \Phi}_F^2} \\
        = & \rbr{\rbr{W_0 - \eta\nabla_0\Rcal(t)}^L - W_0^L}\odot\rbr{W_0^L - \Phi} \\
          & + \frac{1}{2}\norm{\rbr{W_0 - \eta\nabla_0\Rcal(t)}^L - W_0^L}_F^2,
    \end{align*}
    where the $\odot$ denotes the element-wise multiplication. Let
    \begin{align*}
        & I_1 = \eta A_1\odot\rbr{W_0^L-\Phi},\quad I_2 = \sum_{k=2}^L\eta^kA_k\odot\rbr{W_0^L-\Phi},\\
        & I_3 = \frac{1}{2}\norm{\sum_{k=1}^L\eta^kA_k}_F^2,
    \end{align*}
    where the matrix $A_k$ comes from
    \begin{align*}
        \rbr{W_0 - \eta\nabla_0\Rcal(t)}^L = A_0 + \eta A_1 + \cdots + \eta^L A_L.
    \end{align*}
    We have
    \begin{align*}
        \Rcal(t+1) - \Rcal(t) \le I_1 + I_2 + I_3.
    \end{align*}
    Note that 
    \begin{align*}
        \norm{W_0}_2 \le {\phi}^{1/L}, \quad \norm{\nabla_0\Rcal(t)}_F \le  {\phi}^{(L-1)/L}\sqrt{2\Rcal(t)}.
    \end{align*}
    Using the fact that for $0 \le y \le x/L^2$, 
    \begin{align*}
        & (x+y)^L \le x^L + 2Lx^{L-1}y,\\
        & (x+y)^L \le x^L + Lx^{L-1}y + L^2x^{L-2}y^2.
    \end{align*}
    
    Take $\eta  \le \frac{1}{L^2  {\phi}\sqrt{2\Rcal(t)}}$, we have
    \begin{align*}
        \norm{\sum_{k=1}^L\eta^kA_k}_F & \le \rbr{ {\phi}^{1/L} + \eta {\phi}^{\frac{L-1}{L}}\sqrt{2\Rcal(t)}}^L -  {\phi} \\
        & \le 2 \eta L  {\phi}^2 \sqrt{2\Rcal(t)},
    \end{align*}
    and,
    \begin{align*}
        \norm{\sum_{k=2}^L\eta^kA_k}_F \le & \rbr{ {\phi}^{1/L} + \eta {\phi}^{\frac{L-1}{L}}\sqrt{2\Rcal(t)}}^L -  {\phi}  \\
         & - \eta L  {\phi}^{2(L-1)/L} \sqrt{2\Rcal(t)} \\
         \le & 2 \eta^2L^2 {\phi}^3 \Rcal(t).
    \end{align*}
    Thus we have
    \begin{align*}
        I_2 &\le \norm{\sum_{k=2}^L\eta^kA_k}_F\norm{W_0 - \Phi}_F \le 2\sqrt{2} \eta^2L^2\phi^2\Rcal(t)^{3/2} \\
        I_3 &\le \frac{1}{2}\norm{\sum_{k=1}^L\eta^kA_k}_F^2 \le 4 \eta^2L^2\phi^4\Rcal(t).
    \end{align*}
    For $I_1$, we directly have
    \begin{align*}
        I_1 = -\eta L \nabla_0\Rcal(t)\odot W_0^{L-1}\rbr{W_0^L-\Phi} = -\eta L \norm{\nabla_0\Rcal(t)}_F^2.
    \end{align*}
    Since we have $\norm{\nabla_0\Rcal(t)}_F \ge \min\rbr{\lambda_\text{min}\rbr{\Phi}, 1}\sqrt{2\Rcal(t)}$. Therefore by setting $\eta$ to be the minimum of $\frac{1}{L \max\rbr{\lambda_\text{max}^2(\Phi), 1}}$, $\frac{1}{\sqrt{2}L^2\phi\Rcal(t)^{1/2}}$, $\frac{\min(\lambda^2_\text{min}(\Phi), 1)}{2\sqrt{2}L^2\phi^2\Rcal(t)^{1/2}}$, and $\frac{\min(\lambda^2_\text{min}(\Phi), 1)}{4L^2\phi^4}$, we have
    \begin{align*}
        \Rcal(t+1) - \Rcal(t) \le (2-2L)\eta \min(\lambda^2_\text{min}(\Phi), 1)\Rcal(t).
    \end{align*}
    By directly setting
    \begin{align*}
        \lambda \le \frac{\min(\lambda^2_\text{min}(\Phi), 1)}{4\sqrt{d}L^2\max\rbr{\lambda^4_\text{max}(\Phi), 1}},
    \end{align*}
    we can satisfy all the requirements above, which will give us
    \begin{align*}
        \Rcal(t) & \le \sbr{1 - (2L-2)\min\rbr{\lambda_\text{min}^2(\Phi), 1}\eta}^t\Rcal(0) \\
        & \le \exp\sbr{- (2L-2)\min\rbr{\lambda_\text{min}^2(\Phi), 1}\eta t}\Rcal(t).
    \end{align*}
\end{proof}
\section{Experiments}

\subsection{Description of additional baseline methods}

\paragraph{Net2Net} This approach \cite{chen2015net2net} can be viewed as another form of sharing weights then unsharing -- before the Net2Net-based network growth, the narrow network is equivalent to a wide network with weights shared along the width dimension. We compare to Net2Net, to show whether sharing weights along the width dimension is better than sharing weights across layers. In particular, we train a BERT-large model with half of its original width for 50k iteration steps, then grow it to the full size BERT according to the Net2Net approach, and train it for 450k steps. This is similar to our SWE approach with which weights are shared for the first 50k steps, then unshared for 450k steps.

\paragraph{Reparameterization} Instead of weight sharing, we can reparameterize the weights of each layer \cite{lippl2020iterative} according to Equation \ref{eq:reparameter}. In this form, the final gradient of each layer is a linear combination of the averaged gradient, and the layer's original gradient. In particular, we feed the following gradient to the optimizer: $$\widetilde g_{i} = \dfrac{g_{0} + g_{i}}{2}\text{,}$$ where $g_{0}$ is the layer-wise averaged gradient and $g_{i}$ is the gradient of any layer.

\subsection{Experiment results}

We show results of training BERT-large for 0.5 million steps in Table \ref{tab:additional_exp}. Our proposed method consistently outperforms these baseline methods.
\begin{table}[h]
\centering
\caption{Comparing to Net2Net \cite{chen2015net2net} and weight reparameterization \cite{lippl2020iterative}. All models are trained for 0.5 million steps.}
 \vspace{5mm}
\label{tab:additional_exp}
\begin{tabular}{l|ccc}
                     & Net2Net & Repara. & SWE \\ \hline
Pretrain MLM (acc.\%)& 72.34 & 73.57 & \textbf{73.92} \\ \hline
SQuAD average        & 86.23 & 87.57 & \textbf{88.34} \\
GLUE average         & 77.55 & 78.35 & \textbf{78.98} \\ \hline
SQuAD v1.1 (F-1\%)   & 90.72 & 91.55 & \textbf{92.54} \\
SQuAD v2.0 (F-1\%)   & 81.73 & 83.59 & \textbf{84.14} \\
GLUE/AX (corr\%)     & 37.7  & 39.3  & \textbf{40.1}  \\
GLUE/MNLI-m (acc.\%) & 85.1  & 86.5  & \textbf{87.2}  \\
GLUE/MNLI-mm (acc.\%)& 84.5  & 85.7  & \textbf{86.7}  \\
GLUE/QNLI (acc.\%)   & 92.2  & 92.2  & \textbf{93.4}  \\
GLUE/QQP (F-1\%)     & 71.3  & \textbf{72.2}  & 71.7  \\
GLUE/SST-2 (acc.\%)  & 94.5  & 94.2  & \textbf{94.8}  \\
\end{tabular}
\end{table}

\end{document}